\newcommand{\ra}[1]{\renewcommand{\arraystretch}{#1}} 
\colorlet{mylinkcolor}{violet}
\colorlet{mycitecolor}{YellowOrange}
\colorlet{myurlcolor}{Aquamarine}
\colorlet{myblack}{Black}
\definecolor{t10}{HTML}{67000d}
\definecolor{t9}{HTML}{67000d}
\definecolor{t8}{HTML}{a50f15}
\definecolor{t7}{HTML}{cb181d}
\definecolor{t6}{HTML}{ef3b2c}
\definecolor{t5}{HTML}{fb6a4a}
\definecolor{t4}{HTML}{fc9272}
\definecolor{t3}{HTML}{fcbba1}
\definecolor{t2}{HTML}{fee0d2}
\definecolor{t1}{HTML}{fff5f0}
\newcommand{\mpsympair}{
\mbox{
\begin{tikzpicture}[baseline=-0.6ex, overlay]
\draw[black,fill=cyan] (0,0) circle (.8ex);node[midway];
\end{tikzpicture}
\hspace{0.0em} 
}
}
\newcommand{\mpsymopen}{
 \mbox{
 \begin{tikzpicture}[baseline=-0.6ex,overlay]
 \filldraw[black,fill=violet] (-0.7ex,-0.6ex) -- (0ex,0.6ex) -- (.7ex,-0.6ex) -- cycle;node[midway];
 \end{tikzpicture}
 \hspace{0.0em} 
 }
}
\newcommand{\mpsymrecall}{
\mbox{
\begin{tikzpicture}[baseline=-0.6ex, overlay]
\draw[green,fill=green] (0em,0) circle (.8ex);node[midway];
\end{tikzpicture}
\hspace{0em} 
}
}
\title{Sensor Model Identification via Simultaneous Model Selection and State Variable Determination}
\author{Christian Brommer$^{1}$, Alessandro Fornasier$^{1}$, Jan Steinbrener$^{1}$, Stephan Weiss$^{1}$%

\thanks{$^{1}$ Control of Networked Systems Group, University of Klagenfurt, Austria. E-Mail:
        {\tt\footnotesize \href{mailto:christian.brommer@aau.at}{christian.brommer@aau.at}},\\
        {\tt\footnotesize { \{ }}
        {\tt\footnotesize \href{mailto:alessandro.fornasier@ieee.org}{alessandro.fornasier}},
        {\tt\footnotesize \href{mailto:jan.steinbrener@ieee.org}{jan.steinbrener}},\\
        {\tt\footnotesize \href{mailto:stephan.weiss@ieee.org}{stephan.weiss}}
        {\tt\footnotesize { \}@ieee.org} }
        }%
\thanks{Corresponding author: Christian Brommer}%
\thanks{Research was sponsored by the Army Research Office and was accomplished under Cooperative Agreement Number W911NF-21-2-0245. The views and conclusions contained in this document are those of the authors and should not be interpreted as representing the official policies, either expressed or implied, of the Army Research Office or the U.S. Government. The U.S. Government is authorized to reproduce and distribute reprints for Government purposes notwithstanding any copyright notation herein.}
\thanks{This work has been accepted for publication in IEEE Transactions on Robotics (T-RO). $\copyright$ 2025 IEEE.  Personal use of this material is permitted.  Permission from IEEE must be obtained for all other uses, in any current or future media, including reprinting/republishing this material for advertising or promotional purposes, creating new collective works, for resale or redistribution to servers or lists, or reuse of any copyrighted component of this work in other works.}
\thanks{\textbf{The DOI will be added when available.}}%
}
\newtheorem{theorem}{Theorem}[section]
\begin{document}

\maketitle
\begin{acronym}
\acro{MAV}[MAV]{Micro Aerial Vehicle}
\acro{UAV}[UAV]{Unmanned Aerial Vehicle}
\acro{GNSS}[GNSS]{Global Navigation Satellite System}
\acro{RTK}[RTK]{Real-Time Kinematic}
\acro{IMU}[IMU]{Inertial Measurement Unit}
\acro{LRF}[LRF]{Laser Range Finder}
\acro{UWB}[UWB]{Ultra-Wide-Band}
\acro{FoV}[FoV]{Field of View}
\acro{VIO}[VIO]{Visual Inertial Odometry}
\acro{EMI}[EMI]{Electromagnetic Interference}
\acro{MEMS}[MEMS]{Micro-Electromechanical Systems}
\acro{SoC}[SoC]{System on Chip}
\acro{FoV}[FoV]{Field of View}
\acro{UHF}[UHF]{Ultra High Frequency}
\acro{SfM}[SfM]{Structure-from-Motion}
\acro{GPIO}[GPIO]{General Purpose Input/Output}
\acro{EKF}[EKF]{Extended Kalman Filter}
\acro{CSE}[CSE]{Collaborative State Estimation}
\acro{RFID}[RFID]{Radio Frequency Identification}
\acro{LASSO}[LASSO]{Least Absolute Shrinkage and Selection Operator}
\acro{LARS}[LARS]{Least Angle Regression}
\acro{SINDy}[SINDy]{Sparse Identification of Non-linear Dynamics}
\acro{AIC}[AIC]{Akaike Information Criterion}
\acro{BIC}[BIC]{Bayesian Information Criterion}
\acro{SR3}[SR3]{Sparse Relaxed Regularized Regression}
\acro{PSO}[PSO]{Particle Swarm Optimization}
\end{acronym}

%
\begin{abstract}
We present a method for the unattended gray-box identification of sensor models commonly used by localization algorithms in the field of robotics. 
The objective is to determine the most likely sensor model for a time series of unknown measurement data, given an extendable catalog of predefined sensor models. Sensor model definitions may require states for rigid-body calibrations and dedicated reference frames to replicate a measurement based on the robot's localization state. 
A health metric is introduced, which verifies the outcome of the selection process in order to detect false positives and facilitate reliable decision-making. 
In a second stage, an initial guess for identified calibration states is generated, and the necessity of sensor world reference frames is evaluated.
The identified sensor model with its parameter information is then used to parameterize and initialize a state estimation application, thus ensuring a more accurate and robust integration of new sensor elements.
This method is helpful for inexperienced users who want to identify the source and type of a measurement, sensor calibrations, or sensor reference frames. It will also be important in the field of modular multi-agent scenarios and modularized robotic platforms that are augmented by sensor modalities during runtime. Overall, this work aims to provide a simplified integration of sensor modalities to downstream applications and circumvent common pitfalls in the usage and development of localization approaches.
\end{abstract}%
\begin{IEEEkeywords}
Sensor Fusion, State-Estimation, Automated Sensor Integration, Modularity, Autonomous Navigation
\end{IEEEkeywords}
\vspace{-0.2cm}


\section{Introduction}
\label{sec:introduction}
Accurate and robust localization is a crucial prerequisite for any application of autonomous robotics. Without accurate localization information, the generation of control inputs will not be possible. Localization algorithms in research and consumer products heavily rely on a number of sensor types to infer the most accurate position and orientation possible.
In addition to position and orientation, state-of-the-art robot localization approaches such as \acp{EKF} can also estimate additional states such as sensor calibrations (e.g., bias offsets or extrinsic calibration parameters) online.
For this reason, essential core states (position, velocity, orientation, etc.) and sensor calibration states $\mathbf{x}$ are incorporated in the sensor model $h(\mathbf{x})$ and the Jacobian for sensor measurement updates $\mathbf{H}(\mathbf{x}) = \frac{\partial h(\mathbf{x})}{\partial \mathbf{x}}$.

Figure~\ref{fig:sensor_frames} shows the essential configuration of the sensor models that are addressed by the presented approach. A full definition of all sensor models is shown in Section~\ref{sec:method}.
The integration of such sensors can be a complex task, and typically, a number of questions need to be answered prior to the integration:
\begin{itemize}[leftmargin=*,noitemsep,topsep=0pt,parsep=0pt,partopsep=0pt]
    \item What type of information does the sensor provide?
    \item Given a rigid body, what is the extrinsic calibration?
    \item Is the measurement given in the normal direction ($\mvar{T}[sr]$) or the inverse with respect to the robot ($\mvar{T}[sr]^{-1}$) (see Fig.~\ref{fig:sensor_frames})?
    \item Does the measurement model require a reference frame?
\end{itemize}

The integration of a sensor modality usually requires expert knowledge. In order to generate a sensor model for the update in filter definitions, the main components of the general system, such as given and prior states, calibration states, and reference frames, need to be identified.

\begin{figure}[!t]
    \centering
    \includegraphics[width=1.0\linewidth, trim={0cm 0mm 0cm 0mm},clip]{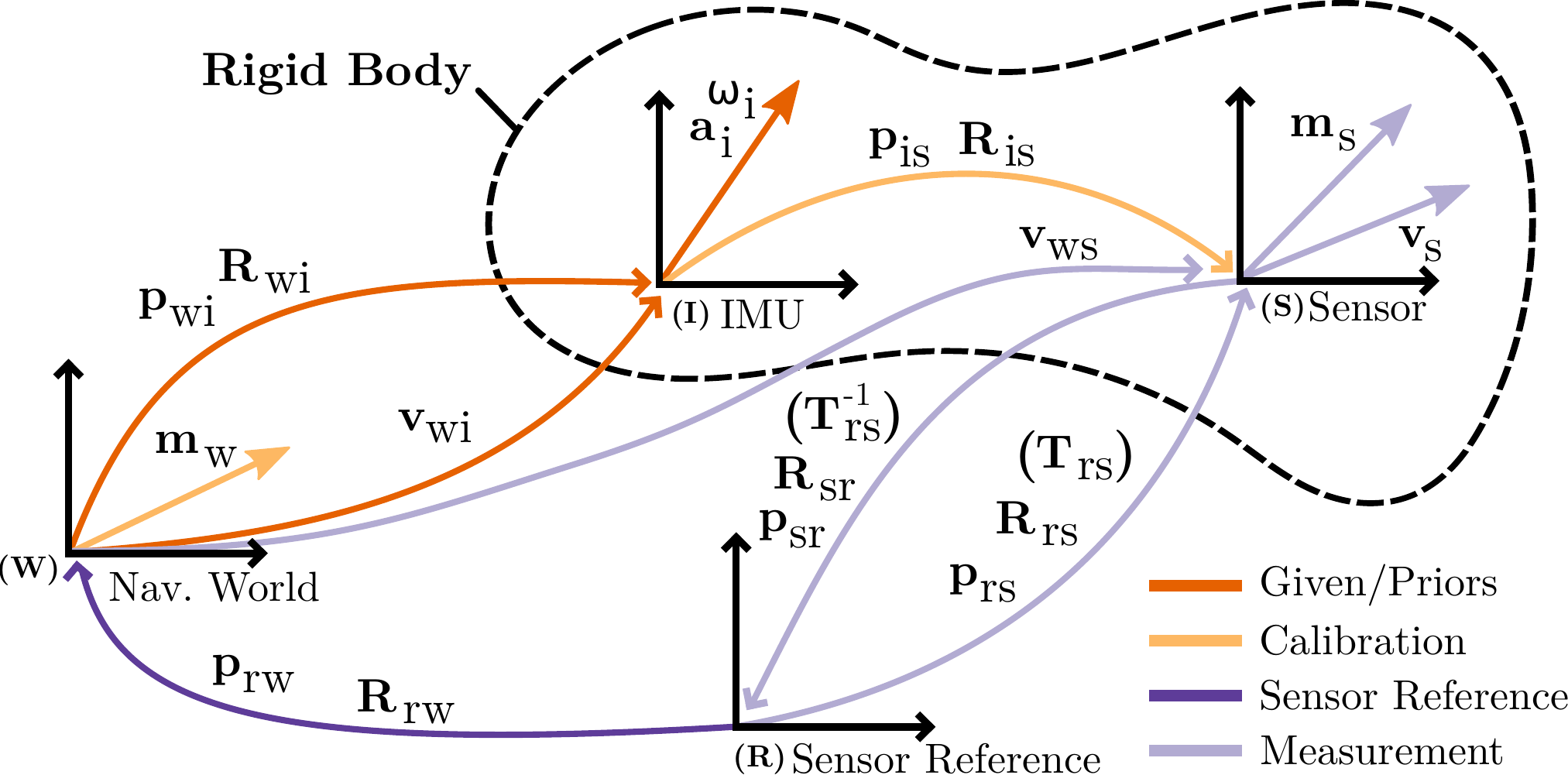}
    \caption{Generalized setup of transformations to express various sensor modalities such as vectors for the magnetic field or the velocity and transformations for 3 and 6-DoF sensor measurements and calibrations.
    This work aims to process a gray-box sensor signal together with a reliable system state to identify a corresponding sensor model and its properties.}
    \label{fig:sensor_frames}
    \vspace{-5mm}
\end{figure}

To provide additional and quantifiable insight into the magnitude of complications related to the setup of localization and state-estimation solutions, we evaluated the number of relevant issues, discussions, and wikis for the open-source platform GitHub\footnote{Data collected from GitHub repositories and discussion boards, open and closed issues, as of December 2024.} with respect to sensor-modeling and state-estimation. The keyword ``Sensor model'' appears in 80k issues, 4k discussions, and 13k wikis, with a significant share of 14.4k issues and 1k discussions related to ``sensor model integration''. Similarly, ``State-Estimation'' (30k issues), ``sensor-fusion'' (8.5k issues), and ``sensor calibration'' (22k issues). This indicates that setting up localization solutions and integrating sensor modalities, 1) requires extensive documentation - to be read, and 2) is not always straightforward and causes predominant and time-consuming issues. Such issues even motivate setups where raw sensor data or entirely proprietary solutions are preferred for ease of use.
The proposed work aims to allow a simplified and automated integration of sensor modules during runtime without prior knowledge about the sensor type or its properties to streamline and robustify the task of integrating a new sensor model. This mitigates possible errors and accelerates the research and development phases.

In previous work \cite{Brommer2021_mars}, we focused on the modularity aspect of a possible application that integrates a sensor modality during runtime. The important aspect is that a sensor module does not need to be \textit{a priori} known to the system.
However, this approach requires knowledge of the sensor type and, ideally, an initial guess for the additional calibration states of a sensor module.

Of course, a pragmatic approach is to always know the sensor modalities that are used during the operation of a robot, or having a self-identifying sensor.
However, we aim to simplify the process of sensor integration without expert knowledge and future work that does not rely on knowing a sensor type before integration. We illustrate two scenarios where the proposed method could be applied:

\paragraph{Automated Robust Sensor Integration}
Sensor devices are not always well documented nor easily integrated by an engineer without prior knowledge about state-estimation algorithms and specific sensor modalities. Assuming an inexperienced person needs to integrate the odometry information provided by an Intel Real-Sense sensor, which is rigidly mounted on a UAV.

First, we need to determine the type and direction of the measurements. In this case, the position and rotation measurements could be expressed w.r.t. an arbitrary vision frame chosen by the sensor up on initialization, its inverse, or different directions for the position and rotation information. Given a lack of documentation, arbitrary reference frames, or inexperience, finding the right references and directions can result in time-consuming tabletop experiments.

The next step is to determine the calibration states and reference frame of the sensor. We assume that the UAV uses an IMU as a core sensor, and we need to find the rigid-body calibration of the Real-Sense w.r.t. the IMU frame. Lack of knowledge on how the IMU is mounted, how the odometry frame of the Real-Sense is oriented, and which arbitrary reference frame is used by the Real-Sense can lead to errors when inferring this calibration through estimation by hand or evaluation of numerical measurements of the IMU and the Real-Sense.
\looseness=-1

We also need to know the reference in which the measurement is expressed and if this differs from the currently used world reference of the localization algorithm. If the reference frame does not differ, then it does not need to be incorporated into the sensor model.
In the case of the Real-Sense, the odometry reference is most likely aligned with the gravity vector, but the remaining orientation parameters and the position depend on the pose of the sensor during internal initialization.

These elements can lead to time-consuming integration and error-prone results. With the proposed method, the sensor type and parameters for its setup are inferred automatically without the need of expert knowledge, thus simplifying the system integration.

\paragraph{\ac{CSE}} To the contrary of the previous example, for \ac{CSE}, localization information of one robot is not only given by rigidly attached sensor modalities. Localization information can also be inferred through temporary encounters with other moving robotic platforms, sharing their state estimates in the vicinity.

One possible use-case is a setup with an \ac{UAV} and rover, providing a recharging base for the \ac{UAV}. While charging, the UAV is temporarily attached to the moving rover and can either share its state estimate or individual sensor modalities with the rover.
In this case, sensor modalities, respective calibration states, and references do not need to be known a priory. Gray-box sensor information can be transmitted to the rover, which identifies the sensor module, type of measurement, and sensor calibrations.
During this encounter, both platforms can benefit and improve their estimated localization.

This example can be transferred to a more tangible industry use case, given a scenario with an autonomous pallet truck that picks up boxes with individual \acp{RFID} modules that can be located within the warehouse. Given the presented method and a short measurement sequence, the localization algorithm of the pallet truck can incorporate the measurement automatically without further knowledge, and benefit from the additional position measurements, improving accuracy and redundancy.

Of course, a more practical approach is to pre-process and unify the output of collaborating robots and to use this information directly. However, if a robot carries more than one sensor, then sharing individual sensor information and incorporating them with accurately modeled extrinsic calibration states in a tightly coupled fashion can improve robustness, accuracy, and consistency, given that probabilistic correlations are estimated accurately. Depending on the overall complexity of such a modular system, a self-identification approach to sensor models can also be more scalable.
In any case, it is helpful if the correct sensor model, and its parameters (i.e., calibrations and references) are inferred automatically, based on the given measurement signal, following the presented method outlined by Figure~\ref{fig:method_outline} to efficiently integrate sensor information online into a localization algorithm.

\noindent The presented work focuses on the following contributions:
\begin{itemize}[leftmargin=*,noitemsep,topsep=0pt,parsep=0pt,partopsep=0pt]
    \item Gray-Box sensor model identification based on noisy estimates of localization states and sensor measurement signals from an unknown sensor type. The measurement source is unknown, and a catalog of the most common sensor models in the field of robotics is provided.
    \item Estimation of sensor calibration states for filter initialization.
    \item Evaluation of the requirement for a dedicated sensor measurement reference frame, needed to align the sensor information with the navigation frame of a localization algorithm.
    \item A health metric for the validation of a selected sensor module, preventing false positives.
\end{itemize}

\noindent Essential steps of the proposed sensor identification method are illustrated in Figure~\ref{fig:method_outline} and further detailed in Section~\ref{sec:method}.
The general structure of the method is as follows. The localization system needs to be in a self-sustained state, meaning a global or stable relative localization needs to be available. In the presented examples, this information is provided by an EKF with relatively noise core states (i.e., an IMU-driven propagation with vision sensor updates).
Stage~1 is responsible for the detection of the correct sensor model and the correct sensor signal references, e.g., is the sensor measurement expressed w.r.t. the world frame, or is the measurement expressed by the inverse? This is done based on an over-determined system definition and provides a health metric to prevent false positive selections.
Stage~2 performs the estimation of the calibration states for the selected sensor model and identifies if a sensor reference frame, different from the world reference frame, is needed. After the sensor model and the existence of additional states are known, the correct Jacobian for the integration of an updated sensor to a filter framework is selected.

\begin{figure}[!t]
    \centering
    \includegraphics[width=1.0\linewidth, trim={0 0 0 0},clip]{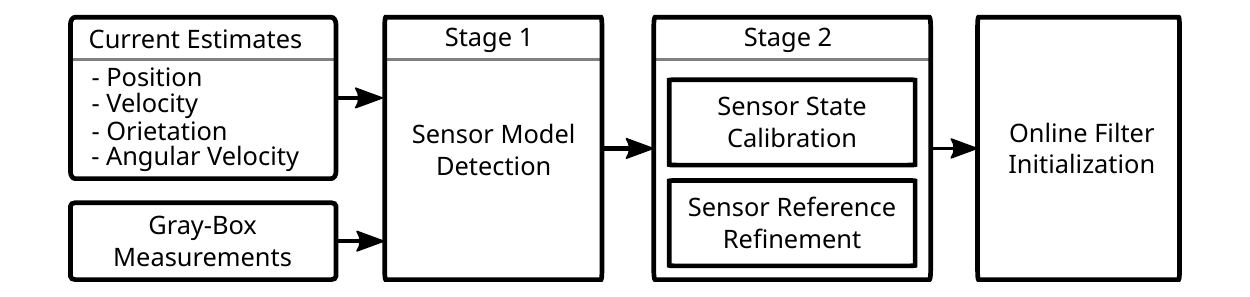}
    \caption{Outline of the stages for the sensor model identification method.}
    \label{fig:method_outline}
\vspace{-6mm}
\end{figure}

In the final step, the sensor model and the parameterization of the initial calibration states are integrated as an additional source of information to a state estimation framework. The sensor self-calibration of the previously presented framework ensures the refinement of the calibration states in the long term.
A complete integration of this process was tested with simulated data in Section~\ref{sec:sensitivity} and with real-world data and two robotic platforms: A UAV use case (see Sec.~\ref{sec:experiment_real_world_uav}) and a ground vehicle use case (see Sec.~\ref{sec:experiment_real_world_ground_vehicle}).
%

\section{Related Work}
\label{sec:related_work}
The presented work generally concerns the research on model identification based on given sensor data and partially known system models.
A number of methods have been developed and address solutions to this problem in various ways.
\looseness=-1

A pragmatic approach was summarized by Mitterer \cite{Mitterer2020}, where a sensor contains an electronic data sheet, accessible to a host, detailing the properties of a sensor. While this is a great solution for higher-level applications, fundamental non-smart sensors do not provide this functionality, and the determination of an extrinsic calibration still requires a dedicated process. 

Analytical methods have been introduced, starting with Fuzzy Modeling. Nagai \cite{Nagai2005} and Frolik \cite{Frolik2001} describe the generation of a fuzzy logic sensor model for a number of inputs and a corresponding output, additionally exposing information from possible latent features. Nagai \cite{Nagai2005} uses uncategorized input information to generate a refined model, tailored to a specific output, such as to predict variables in a chemical process where direct measurements are not possible.
The approach is especially interesting because it allows different criteria on the variable sets, such as Gaussian distributions, and the clustering of data points (Fuzzy c-means) to coherent structures by simplification with Kohonen maps. The clustering achieves a simplified set of sensor states and promotes resilience to over-fitting. 

However, as mentioned in the introduction, the goal is to generate a model that can be used in a filter setup with online self-calibration.
Since this generated model does not provide a direct association with the physical states of a sensor model, an abstract Fuzzy model will not allow human interpretation of physical calibration states and would provide a solution with untraceable states.

A more traceable solution for identifying states or coefficients and their significance was introduced by Tibshirani~\cite{Tibshirani1996} with the \acf{LASSO} and later adaptation \acf{LARS} by Efron~\cite{Efron2004}. LASSO poses a regularization in the form of the L1-Norm on a selection of coefficients to an optimization problem, thus promoting sparsity of coefficients. A selection is made by eliminating coefficients with insignificant contributions to the overall solution.
To utilize this approach, an over-determined system must be defined and evaluated against a given stream of e.g. sensor signals.
This approach was not designed to identify individual states of higher dimensions, and each coefficient is evaluated individually, without definable coherence between states.
This renders the usage of the LASSO approach difficult for usage in multi-dimensional geometric problems. It is not guaranteed that a non-required vector in $\mathbb{R}^{3}$ is eliminated entirely, especially if the signal has noise, allowing for possible over-fitting.

The LASSO operator is one of the main techniques used by the \acf{SINDy} framework by Brunton~\cite{Brunton2016}. SINDy was used for the identification of time-varying physical problems and demonstrated with fluid dynamical problems \cite{Loiseau2018}.
It requires a time series of data and over-determined model expressions that are commonly present for the problems that are to be analyzed.
This approach was extended by Mangan~\cite{Mangan2017} to incorporate information criteria (\ac{AIC} and \ac{BIC}) for model verification and to find the most informative model through sequential thresholding. 
Mangan~\cite{Mangan2017} also discussed the problem that noise-afflicted data could prevent the identification of the correct model with SINDy and the possible requirement of pre-filtered data.

Zheng~\cite{Zheng2019} also extended the work on SINDy by introducing the \ac{SR3} Framework. This extension introduced a relaxation loss term, which aims to increase the robustness of the state sparsification if inaccurate or noise-afflicted data is given.
The sparsification is not performed on the coefficients $\mathbf{x}$ directly but via a second loss term $\mathbf{w}$ which allows the relaxation 
\mbox{$\min_{x,w}~\frac{1}{2}\lVert \mathbf{Ax}-\mathbf{b} \rVert^2_2 + \lVert  \mathbf{w} \rVert_1+ \lVert \mathbf{x}-\mathbf{w} \rVert^2_2$} with system definition $\mathbf{A}$ and input $\mathbf{b}$.
While SINDy can solve multi-dimensional problems, it was not directly designed to incorporate multi-dimensional states such as geometrical transformations.

Other approaches for model identification perform iterative optimization steps.
Mixed integer non-linear programming is one technique; it also requires a predefined and meaningful catalog of model components and estimates coefficients and boolean-like or integer variables using a branch and bound technique.
Integer variables are selected iteratively (branch), and the value range is limited to a specific range (bound). Variables that do not minimize the problem in the value range are reversed, and the next branch of variable thresholding is followed.

This approach is similar to Bayesian networks, which also utilize a branching technique, with the difference that individual states are added without a decision variable as a weight, and the choice of the addition or existence of a state is made based on the improvement of the system error.
Kadane~\cite{Kadane2004} and Khosbayar~\cite{Khosbayar2022} describe multiple approaches to model selection by utilizing the Bayesian approach. Compared to the optimization-based methods, which require penalties that promote sparsification, e.g., regularization through LASSO and LARS or multiplicative mixed integers, the Bayesian approach includes or rejects model states for each iteration.
This results in a similar methodology of sparsifying a sensor model while maintaining accuracy.

A drawback with branching techniques is that the order in which states are added and evaluated can lead to non-deterministic and possibly wrong results depending on the nature of the proposed system models.
An additional state that improves the system's accuracy in first iterations does not necessarily belong to the correct sensor model.
Even with forward selection and backward elimination methods, the final model selection might minimize the error but remain incorrect. Reasons for false detections can be similarities of the sensor models (see~Eq.~\eqref{eq:mag_detail}~and~\eqref{eq:rot_detail}) or differences in magnitudes of individual state variables (see~Sec.~\ref{sec:method}). Thus, an incorrect first choice and the addition of a state can be irreversible.

A brute-force method for model identification through Boolean decision-making would be to iterate through a binary table for all posed states one by one. However, depending on the number of possible states, this approach can be too time-consuming/inefficient.
Lesage~\cite{Lesage-Landry2021} uses a probabilistic approach with randomly drawn Bernoulli samples to find a suitable selection more efficiently,
while Xiong~\cite{Xiong2021} updates Boolean definitions based on a thresholding technique and validates decisions based on the iterative system error.

In summary, Fuzzy logic finds abstract states that correlate given input data to output data. However, it does not provide an association to physically meaningful states, which prevents online state estimation in subsequent stages.
LASSO regularizes the collective magnitude of a state vector and reduces the number of states (selection) for the solution of a problem. However, it does not allow structured group penalties of multi-dimensional variables, making the selection process inaccurate for geometrical problems.
SINDy finds a polynomial for a dynamic system model, and variations to this approach address further improvements to noise resilience; however, incorporating states that perform multi-dimensional geometrical transformations is not straightforward.

Furthermore, approaches with iterative refinement stages divide the solution to model selection into two stages, the state selection and the state estimation in consecutive steps, where the state selection can change per iteration after state values have been refined. Such approaches can be time-consuming and lead to incorrect state selections if the posed model states are similar and prone to over-fitting in the presence of noise.

To the best of our knowledge, no existing approach or framework addresses the identification of multi-dimensional, geometrically constrained sensor models, nor the detection of false positives as presented by this work.
Our approach contributes to the existing literature, by allowing for a direct selection of interpretable sensor models that can incorporate multi-dimensional geometric transformations and a system definition for the optimization of selection operators and refinement of state variables simultaneously. It also establishes a health metric for the identification of false positive selections, and resilience to signal scaling and noise.
%

\section{Notation}
\label{sec:notation}
For a generalized description of sensor models, we are defining frame dependencies as shown by Figure \ref{fig:sensor_frames}.
Following the notation $\vvar{p}[B][C][A]$ for a translation of frame~\rframe{C} with respect to frame~\rframe{B} expressed in frame~\rframe{A}. 
With
$\vvar{p}[B][C][A] \equiv \vvar{p}[B][C]$ if frame \rframe{A}=\rframe{B}.
The quaternion~$\qvar{A}{B}$ describes the rotation of frame~\rframe{B} with respect to frame~\rframe{A} and corresponds to a rotation matrix ${\mathbf{R}_{\left(\qvar{A}{B}\right)} \equiv \rvar{A}{B}}$.
A variable describing a time sequence is denoted by the prefix $t$, e.g., $\tvar{v}$. Since the transformations in Figure~\ref{fig:sensor_frames} are used for multiple sensors, such states are labeled with the sensor type ${}^\text{type}\mathbf{x}$, e.g.${}^\mathsf{p}\mathbf{p}_{\mathsf{is}}$ for the translation of the sensor frame w.r.t. the IMU frame used for the position sensor (\rframe{P}) (see labels in Tab.~\ref{tab:state_sharing}).

Rotations utilized within the system definition are defined as Rotation matrices in $SO\left(3\right)$. Rotations as states which are optimized, or measurements, are defined in the tangent space with conversions $\log(\cdot)^\vee : SO\left(3\right) \rightarrow \mathbb{R}^{3}$  and $\exp(\boldsymbol{\omega}^\wedge):\mathbb{R}^{3} \rightarrow SO\left(3\right)$ with $(\cdot)^\wedge \equiv \lfloor\cdot\rfloor_\mathsf{x}$ and $(\cdot)^\vee$ as the structural decomposition of the skew-symmetric structure.
This definition ensures the correct properties of rotations during the optimization process.

\section{Method}
\label{sec:method}
This section describes how an over-determined system model is defined, which aspects are critical for selection operators, and which constraints are important to render the presented method reliable.
As illustrated by Figure~\ref{fig:method_outline}, the proposed method requires a given sequence of associated (synchronized) data points providing the current state estimate and the measurement by the unknown sensor module as the input.
The two data streams can be acquired, e.g., during the operation of the robot and after a sensor was added for online integration, or in a handheld setting for offline processing.
After a data sequence was acquired, the data points of the two data streams need to be associated, e.g., based on timestamps. If a data point of one data stream has no valid association, then this point is discarded. While the data needs to be correctly associated, it does not have to be given in a causal time sequence. After the data points are synchronized, the recorded sequence is fixed and processed in Stage~1.

The known inputs are the essential states required to describe the location of a robot, commonly called \textit{core states}. For this work, the core states are defined by the position $\mathbf{p}_\mathsf{wi}$, rotation $\mathbf{R}_{(\mathbf{q}_\mathsf{wi})}$, and velocity $\mathbf{v}_\mathsf{wi}$ of an IMU-frame $\mathsf{i}$ with respect to the navigation world-frame $\mathsf{W}$.
Because we are including velocity sensor models, we also require the angular velocity information from the IMU~$\boldsymbol{\omega}_\mathsf{i}$. 
This sequence of core states 
\begin{equation}
        {}^t\mathbf{x}_\mathsf{c}=\left[ {}^t\mathbf{p}_\mathsf{wi}^\mathsf{T} ~ {}^t\mathbf{v}_\mathsf{wi}^\mathsf{T} ~ {}^t\mathbf{q}_\mathsf{wi}^\mathsf{T} ~ {}^t\boldsymbol{\omega}_\mathsf{i}^\mathsf{T} \right]^\mathsf{T} \in \mathbb{R}^{13\times n}
    \label{eq:core_states}
\end{equation}
is provided online by an IMU-driven EKF filter framework such as presented by \cite{Brommer2021_mars}. In scenarios where an IMU is not available but data such as angular velocity is required by a sensor model, this information would need to be estimated using available sensor data containing rotational information and e.g. numerical differentiation.

Stage~1, detailed in Section~\ref{sec:sensor_identification}, performs the main model identification based on a fixed given data sequence and a generalized model definition that adheres to the reference frame and state structure shown by Figure~\ref{fig:sensor_frames}.
According to this structure, a sensor can have an extrinsic calibration state describing a transformation of the sensor frame w.r.t. the IMU frame ($\mathbf{p}_\mathsf{is}$, and $\mathbf{R}_\mathsf{is}$).
The local deviation of the magnetic field is described as a Cartesian vector $\mathbf{m}_\mathsf{w}$ which does not need to align with any axis of the world frame and depends on the current geolocation~\cite{Brommer2020_mag}.

A dedicated sensor reference frame $\mathsf{R}$ is introduced because the information provided by a sensor entity does not need to align with the current global navigation world frame $\mathsf{W}$ of the estimator, thus requiring transformations $\mathbf{p}_\mathsf{rw}$ and $\mathbf{R}_\mathsf{rw}$. An example is given by a vision sensor that is initialized at a later point of a vehicle operation, and its reference is defined relative to the current location.

Sensor measurements can either be expressed w.r.t. the sensor frame $\mathsf{S}$ itself, such as the magnetic field $\mathbf{m}_\mathsf{s}$ and the body velocity $\mathbf{v}_\mathsf{s}$, or as position $\mathbf{p}_\mathsf{rs}$ and rotation $\mathbf{R}_\mathsf{rs}$ of the sensor frame w.r.t. the sensor reference frame $\mathsf{R}$, or as the inverse; the sensor reference frame w.r.t. the sensor frame~$\mathbf{p}_\mathsf{sr}$,~$\mathbf{R}_\mathsf{sr}$.
A special case is presented if the sensor measurement is the inverse $\mathbf{T}_\mathsf{rs}^{-1}$ (see Fig.~\ref{fig:sensor_frames}). In that case, the rotation $\mathbf{R}_\mathsf{rw}$ is a free parameter and not constrained by the sensor reference frame $\mathsf{S}$ nor the navigation world frame $\mathsf{W}$.
Thus, the transformations $\mathbf{p}_\mathsf{rw}$ and $\mathbf{R}_\mathsf{rw}$ are co-dependent and infinite valid solutions exist as long as $\mathbf{p}_\mathsf{rw}$ is chosen according to $\mathbf{R}_\mathsf{rw}$ to remain coherent.
Stage~2, described in Section~\ref{sec:sensor_state_calibration}, performs refinements and final choices on the calibration states after the general sensor model was chosen. 

\subsection{Sensor Identification}
\label{sec:sensor_identification}

The goal of this work is to identify the following sensor models commonly used in robot localization tasks. We use the notation of rotation matrices to express their state variables expressed in the tangent space e.g. $\exp(\boldsymbol{\omega}_\mathsf{wi}^\wedge) = \mathbf{R}_\mathsf{wi}$, to provide a concise description of the sensor models.
While the symbolic expressions for sensor states are the same for each sensor model in Figure~\ref{fig:sensor_frames}, each model has a dedicated state variable for the selection process according to Table~\ref{tab:state_sharing}. Only the sensor reference frame is defined as a common, shared, optimization state to provide an additional constraint and to restrain incorrect model selections while the reference frame converges.
Model states are given by
\begin{align}
\mathbf{x}_\mathsf{m} = [
    &\mathbf{p}_\mathsf{rw}^\mathsf{T}~
    \boldsymbol{\omega}_\mathsf{rw}^\mathsf{T}~
    {}^\mathsf{p}\mathbf{p}_\mathsf{is}^\mathsf{T}~
    {}^\mathsf{ip}\mathbf{p}_\mathsf{is}^\mathsf{T}~
    {}^\mathsf{ip}\boldsymbol{\omega}_\mathsf{is}^\mathsf{T}~
    {}^\mathsf{r}\boldsymbol{\omega}_\mathsf{is}^\mathsf{T}~ \nonumber \\
    &{}^\mathsf{ir}\boldsymbol{\omega}_\mathsf{is}^\mathsf{T}~ 
    {}^\mathsf{vel}\mathbf{p}_\mathsf{is}^\mathsf{T}~
    {}^\mathsf{bv}\mathbf{p}_\mathsf{is}^\mathsf{T}~
    {}^\mathsf{bv}\boldsymbol{\omega}_\mathsf{is}^\mathsf{T}~
    {}^\mathsf{mag}\boldsymbol{\omega}_\mathsf{is}^\mathsf{T}~
    \mathbf{m}_\mathsf{w}^\mathsf{T}
    ]^\mathsf{T} {\in \mathbb{R}^{36}}.
\end{align}
Let \mbox{$\mathbf{x} = \left[ {}^t\mathbf{x}_\mathsf{c}^\mathsf{T}~\mathbf{x}_\mathsf{m}^\mathsf{T} \right]^\mathsf{T}$} be the full state vector, containing the time sequence of the core states ${}^t\mathbf{x}_\mathsf{c}$ and the self-calibration states of all proposed sensor models ${}^t\mathbf{x}_\mathsf{m}$ for the following sensor model definitions:

{%
\setlength{\belowdisplayskip}{-3pt}%
\setlength{\abovedisplayskip}{-3pt}%
\begin{align}
    \intertext{Position}
    \mathbf{p}_\mathsf{rs}{(\mathbf{x})} &= \mathbf{p}_\mathsf{rw} + \mathbf{R}_\mathsf{rw} \left({}^t\mathbf{p}_\mathsf{wi} + {}^t\mathbf{R}_\mathsf{wi} ~ {}^\mathsf{p}\mathbf{p}_\mathsf{is}\right)
    \label{eq:position}\\
    \intertext{Inverse Position}
    \mathbf{p}_\mathsf{sr}{(\mathbf{x})} &= - {}^\mathsf{ip}\mathbf{R}_\mathsf{is}^\mathsf{T} ~
    \left(
    {}^t\mathbf{R}_\mathsf{wi}^\mathsf{T}
    \left(\mathbf{R}_\mathsf{rw}^\mathsf{T} ~ \mathbf{p}_\mathsf{rw} + {}^t\mathbf{p}_\mathsf{wi}
    \right)
    + {}^\mathsf{ip}\mathbf{p}_\mathsf{is}\right)
    \label{eq:inv_position}\\
    \intertext{Rotation}
    \mathbf{R}_\mathsf{rs}{(\mathbf{x})} &= \mathbf{R}_\mathsf{rw} ~ {}^t\mathbf{R}_\mathsf{wi} ~ {}^\mathsf{r}\mathbf{R}_\mathsf{is}
    \label{eq:rotation}\\
    \intertext{Inverse Rotation}
    \mathbf{R}_\mathsf{sr}{(\mathbf{x})} &= {}^\mathsf{ir}\mathbf{R}_\mathsf{is}^\mathsf{T} ~ {}^t\mathbf{R}_\mathsf{wi}^\mathsf{T} ~ \mathbf{R}_\mathsf{rw}^\mathsf{T}
    \label{eq:inv_rotation}\\
    \intertext{World Velocity}
    \mathbf{v}_\mathsf{ws}{(\mathbf{x})} &= {}^t\mathbf{v}_\mathsf{wi} + {}^t\mathbf{R}_\mathsf{wi} ~ {}^t\boldsymbol{\omega}_\mathsf{i}^\wedge ~ {}^\mathsf{vel}\mathbf{p}_\mathsf{is}
    \label{eq:velocity}\\
    \intertext{Body Velocity}
    \mathbf{v}_\mathsf{s}{(\mathbf{x})} &= {}^\mathsf{bv}\mathbf{R}_\mathsf{is}^\mathsf{T} ~ {}^t\mathbf{R}_\mathsf{wi}^\mathsf{T} ~ {}^t\mathbf{v}_\mathsf{wi} 
    + {}^\mathsf{bv}\mathbf{R}_\mathsf{is}^\mathsf{T} ~ {}^t\boldsymbol{\omega}_\mathsf{i}^\wedge ~ {}^\mathsf{bv}\mathbf{p}_\mathsf{is}
    \label{eq:body_velocity}\\
    \intertext{Magnetometer} 
    \mathbf{m}_\mathsf{s}{(\mathbf{x})} &= {}^\mathsf{mag}\mathbf{R}_{\mathsf{is}}^\mathsf{T} ~ {}^t\mathbf{R}_{\mathsf{wi}}^\mathsf{T} ~ \mathbf{m}_\mathsf{w}
    \label{eq:magnetometer}
\end{align}
    \vspace{2mm}
}
%

Table~\ref{tab:state_sharing} also shows the categorical type of states that are similar in purpose between sensor models. As a brief reference for future work; we evaluated a system set up where each sensor state was defined as a shared state between models i.e. ${}^\mathsf{ip}\mathbf{p}_{\mathsf{is}}$ and ${}^\mathsf{p}\mathbf{p}_{\mathsf{is}}$, the extrinsic calibration of the position and inverse position sensor was defined as one coupled state. The intent was to pose additional constraints in the form of coherence, but the results were limited in terms of robustness. This indicates that the system definition cannot be oversimplified, and individual states need to be granular while the system becomes more versatile.
However, the states for the reference frames $\mathbf{p}_\mathsf{rw}$ and $\mathbf{R}_\mathsf{rw}$ are shared which benefits robustness.
\begin{table}[htbp]
    \small
    \caption{Sensor model state association and sensor reference state sharing. Fields marked with $\mathsf{x}$ indicate a shared state, transformation entries represent a dedicated state for the model in the columns state category, and a free space indicates that a state is not required. 
    }%
    \vspace*{-2mm}
    \label{tab:state_sharing}
    \begin{center}
        \begin{tabular}{ | l | c | c | c | c |} 
          \hline
          \textbf{Model}   & $\mathbf{p}_{\mathsf{is}}$ & $\mathbf{R}_{\mathsf{is}}$ & $\mathbf{p}_\mathsf{rw}$ & $\mathbf{R}_\mathsf{rw}$ \\ 
          \hline
          Position         & ${}^\mathsf{p}\mathbf{p}_{\mathsf{is}}$ &   & $\mathbf{\mathsf{x}}$ & $\mathbf{\mathsf{x}}$ \\
          Inverse Position & ${}^\mathsf{ip}\mathbf{p}_{\mathsf{is}}$ & ${}^\mathsf{ip}\mathbf{R}_{\mathsf{is}}$ & $\mathbf{\mathsf{x}}$ & $\mathbf{\mathsf{x}}$ \\
          Rotation         &  &  ${}^\mathsf{r}\mathbf{R}_{\mathsf{is}}$ & & $\mathbf{\mathsf{x}}$ \\
          Inverse Rotation &  & ${}^\mathsf{ir}\mathbf{R}_{\mathsf{is}}$ & & $\mathbf{\mathsf{x}}$ \\
          World Velocity         & ${}^\mathsf{vel}\mathbf{p}_{\mathsf{is}}$ & &  &  \\
          Body Velocity    & ${}^\mathsf{bv}\mathbf{p}_{\mathsf{is}}$ & ${}^\mathsf{bv}\mathbf{R}_{\mathsf{is}}$ &  & \\
          Magnetometer     & & ${}^\mathsf{mag}\mathbf{R}_{\mathsf{is}}$ & & \\
          \hline
        \end{tabular}
    \end{center}
    \vspace{-5mm}
\end{table}%

Before continuing to the selection of states and models, we need to define the format of states and measurements. Translation states, as well as position $[m]$, velocity $[m/s]$, and magnetometer measurements are defined as direction vectors in the 2-Sphere~$S^2$. Because of this, the unit of the magnetometer measurements are not relevant for the detection process; units may be Tesla, 'counts' as provided by certain \acp{SoC}, or any.

Rotational states $\boldsymbol{\omega}$ within the sensor model definition are defined trough their tangent space value $\boldsymbol{\omega}$, and therefore, the parameters for a particular rotation are not unique. This does not pose issues in theory, but for real-world applications, this can result in numerical issues if the values for $\boldsymbol{\omega}$ are of higher magnitude.
Thus, given the axis-angle representation, we are limiting the magnitude of the angle for all rotational states in the range of $-\pi \le \boldsymbol{\omega} \le \pi$ by defining individual loss terms.
\begin{align}
    \mathcal{L}_\mathsf{rot}(\boldsymbol{\omega}) &= \begin{cases} 
    \lVert\boldsymbol{\omega}\rVert - \pi, & \text{if } \pi < \lVert\boldsymbol{\omega}\rVert\\
    0, & \text{otherwise} \\
    \end{cases} \\
    \text{with~} \frac{\boldsymbol{\omega}}{\lVert\boldsymbol{\omega}\rVert} \lVert\boldsymbol{\omega}\rVert &= \boldsymbol{\omega} = \log(\mathbf{R})^\vee,
\end{align}
$\mathbf{R}$ being a rotation matrix, $\frac{\boldsymbol{\omega}}{\lVert\boldsymbol{\omega}\rVert}$ the axis of rotation and $\lVert\boldsymbol{\omega}\rVert$ as the angle of rotation.

Our method utilizes soft Booleans $\mathcal{B} = [0, 1]$ for the selection of sensor models. Ideally, such soft Booleans can vary in a limited range during the optimization process $b_\mathsf{opt} \in \mathcal{B}$ and are categorized in a Boolean set in the final stage of the optimization $b_\mathsf{final} \in \{0, 1\}$. 
All selectors $b_\mathsf{opt}$ are valid candidates when the method is initialized. The soft Boolean selectors converge concurrently during the optimization while their value range is constrained by dedicated loss terms (see Fig.~\ref{fig:experiment_convergence} in the experiment section). Given ideal data, the selectors can converge to a single choice. Given realistic and imperfect data, the choice can become more ambiguous. Because of this, we define a decision and health metric to define $b_\mathsf{final}$ and to ensure that the correct model was selected.

As mentioned in the related work (Sec.~\ref{sec:related_work}), our goal is to not perform an iterative process such as, branch and bound, or randomization of Boolean code tables. The method described by this work performs an optimization and selection process on all states simultaneously in stage one.

Utilizing the described soft Booleans as selection operators allows for a grouping of multi-dimensional geometrical terms, which is required for the selection of sensor models within the over-determined system definition.
While LASSO promotes sparsity for individual state-values $\mathbf{v}_i \in \mathbb{R}^n$ as means of selections i.e., \mbox{$\min_{\boldsymbol{\alpha}}~\lVert \boldsymbol{\alpha} \rVert_1,~ \boldsymbol{\alpha} = [\mathbf{v}_1, \mathbf{v}_2, \ldots, \mathbf{v}_n]$}, the utilization of multiplicative soft Booleans ($b_{i} \in \mathcal{B}$) extends this approach and allows for the grouping and forming of multi-dimensional states. 
Let $\mathcal{L}_\mathsf{B}\left( \mathbf{\mathsf{b}} \right)$ be a collection of loss-terms defined by Equation~\eqref{eq:bool_loss} to enforce the properties of soft Booleans for the final system definition. This allows for multiplicative selections such as:
\begin{align}
    \boldsymbol{\beta} &= \left[b_1 \mathbf{v}_1,~b_2 \mathbf{v}_2,\ldots, b_n\mathbf{v}_n \right] \\
    \min_{\mathbf{b}}~y &= \sum_i \mathcal{L}_\mathsf{B}(b_i).
\end{align}
Where $b_i$ is the Boolean selector for the $\mathbf{v}_i$ state.
Please keep in mind that this example only shows the regularization as an example with respect to a full problem definition of a system.

We evaluated two concepts: a method for \textit{individual state selections}, which allows multiple selection operators to be true, and a \textit{direct selection approach}, which allows a \textit{one-out-of-many} single choice of pre-defined sensor models.

The concept of \textit{individual state selection} was limited in scalability and robustness for real-world scenarios. The corresponding system definition requires the augmentation of all sensor model states by individual soft Booleans and careful modeling of the inter-dependencies among proposed sensor models.
The main reason for the limited robustness is a sensitivity to the initial convergence phase. One of the main problems addressed by this work is that the selection of states and the convergence of state values should coincide. However, this approach was sensitive to signal attenuation and prone to cross-detection of states due to over-fitting, promoted by incorrect utilization of states.

Assuming a measurement was given by a position sensor. The concurring states of a magnetometer model would produce a signal component with a much lower magnitude compared to global position information. A magnetic field vector can freely rotate within the noise band of the position sensor signal and lead to the minimization of the system's residual, but results in an overall sensor model that is not meaningful.
This was also evaluated by using \ac{PSO} to circumvent such issues, but resulted in untraceable processing time and limited success for ill-conditioned scenarios.

The approach for \textit{pre-defined sensor model selection} did not fall short of the outlined issues and is presented in the following.
The method allows a robust selection of dedicated sensor models \mbox{(Eq.~\eqref{eq:position}-\eqref{eq:magnetometer})} as a whole and resolves the weaknesses of the approach for \textit{single state selection} which required individual loss terms for each selection operator, while the \textit{direct selection approach} requires a global set of regularizations to enforce the soft Boolean properties and achieve a \textit{one-out-of-many} selection.

A sensitivity analysis in Section~\ref{sec:sensitivity} has shown that the presented method produces zero false positives for data with high SNR, indicating that ideally, only one model can be selected for close to ideal data.

While this is true, given simulated and noise-free data with a trajectory that ensures the observability of states, an application with real-world data requires additional steps to achieve a robust result. The first step is to normalize the unknown measurement as well as the individual components of the over-determined system model to the value range $[0,1]$.
This normalization mitigates errors due to over-fitting and sensitivity to signal attenuation, which was shown as an example in the previous section. In particular, the magnetometer sensor model (Eq.\eqref{eq:magnetometer}) and especially the ubiquity of its state $\mathbf{m}_\mathsf{w}$ can freely contribute to the minimization of noise-afflicted residuals.
Measurement and model normalization addresses this issue while also improving resilience to noise (see~Sec.~\ref{sec:experiments}).

However, noise-afflicted data and uncertainties of the given core states can still prevent a definite single model selection since a number of sensor models can contribute to a solution partially. Thus, a set of constraints, operating on the selection operators $\mathbf{\mathsf{b}}$, are defined to enforce the convergence to a single selection. Given $b_* \in \mathcal{B}$, $\mathbf{\mathsf{b}} \in \mathcal{B}^n$, the following loss terms can be combined as 
\begin{equation}
\mathcal{L}_\mathsf{B}\left( \mathbf{\mathsf{b}} \right) = \mathcal{L}_\mathsf{norm}\left( \mathbf{\mathsf{b}} \right) + \mathcal{L}_\mathsf{pos}\left( \mathbf{\mathsf{b}} \right) + \mathcal{L}_\mathsf{std}\left( \mathbf{\mathsf{b}} \right). \label{eq:bool_loss}
\end{equation}
%
First, enforce that $\mathbf{\mathsf{b}}$ is constrained to a single overall contribution of models 
\begin{equation}
\mathcal{L}_\mathsf{norm} = \left| \left\lVert \mathbf{\mathsf{b}} \right\rVert_1 - 1 \right|_2 , \label{eq:def_loss_norm}
\end{equation}
which does not over-constrain the system and allows for a free convergence of state-values and selectors because a mixture of models is still possible.
A second constraint enforces that all selectors are positive by posing a penalty for negative selectors
\begin{equation}
\mathcal{L}_\mathsf{pos} = \left\lVert \{ b \mid b \in \mathbf{\mathsf{b}},\, b < 0 \} \right\rVert_2.\label{eq:def_loss_pos}
\vspace{-2mm}
\end{equation}
This constraint is crucial for the rotation models (Eq.~\ref{eq:rotation}~and~\ref{eq:inv_rotation}) because the measurement of a rotation signal and the model definition provide rotational information expressed in the tangent space. Since, \mbox{$\log(\mathbf{R}^\mathsf{T})^\vee \equiv -\boldsymbol{\omega}$}, the definition of the rotation and inverse rotation model can interfere and possibly cancel each other out while, at the same time, reaching a valid decision for the correct module.
A valid solution from the system point of view could otherwise be a selection with \mbox{$\left([{b}_{\mathbf{R}_\mathsf{sr}} = 0.5, {b}_{\mathbf{R}_\mathsf{rs}} = -0.5, {b}_{\mathbf{p}_\mathsf{sr}} = 1] \right)$}, leading to ambiguities for the decision-making process.

While the normalization already improves issues due to signal attenuation, an additional improvement can be made by enforcing single vector module entities to $\left\lVert\mathbf{v}\right\rVert_2=1$. In the presented case, this only concerns the magnetometer model and renders the constraint to
\begin{equation}
\mathcal{L}_\mathsf{vec} = \lVert \lVert \mathbf{\mathbf{m}_\mathsf{s}} \rVert_2 - 1 \rVert_2.\label{eq:def_loss_vec}
\end{equation}
Single vector entities can highly contribute to over-fitting and false positives, especially during the transient phase.
For this reason, this loss term is squared in the final system definition (Eq.~\ref{eq:sys_loss_group}) to provide steeper gradients for faster convergence to fully avoid possible incorrect convergence of the overall system. 

Finally, to enforce a \textit{single} decision, we utilize the regularization introduced by Theorem~\ref{lemma:std_reg} in conjunction with $\mathcal{L}_\mathsf{norm}$, which ensures $\left\lVert \mathbf{b_\mathsf{vec}} \right\rVert_1 = 1$ as
\begin{equation}
\mathcal{L}_\mathsf{std} = \left\lVert std(\mathbf{b}_\mathsf{vec}) - \frac{1}{\sqrt{N}} \right\rVert_2 , \label{eq:def_loss_std}
\end{equation}
with $std(\mathbf{x}) = \sqrt{ \frac{1}{N-1}  \sum_{i}(x_i - \mu)^2}$ and $N$ as dimension of~$\mathbf{\mathsf{b}}$.

The reasoning behind the loss-term $\mathcal{L}_\mathsf{std}$ and the corresponding theorem~\ref{lemma:std_reg} originated based on the property of maximum variation for a vector \mbox{$\mathbf{x} \in \mathbb{R}^N$} as follows: If we pose the constraints $\mathcal{L}_\mathsf{norm}$ and $\mathcal{L}_\mathsf{pos}$ onto a decision vector $\mathbf{x}$ with soft Boolean and non-random elements $x_i$, then the standard deviation of this vector has an upper limit, equivalent to the maximum variation of $\mathbf{x}$. The proof shows, that the specific maximum variation of vector $\mathbf{x}$ is $\frac{1}{\sqrt{N}}$ and can only be reached if the vector has a single entry of one and zeros otherwise.
This property is essential for loss-term $\mathcal{L}_\mathsf{std}$ and the presented one-out-of-many decision making approach which enforces a single selection.
\vspace{2mm}

\begin{theorem}
Let $\mathbf{x} \in \mathbb{R}^N$ be a vector.
Let $\mathit{std}(\mathbf{x})$ represent the standard deviation of $\mathbf{x}$. Assume $0 \leq x_i \leq 1 \; \forall\; i = 1,\dots,N$ of vector $\mathbf{x}$, $\lVert \mathbf{x} \rVert_1 = 1$, and $\text{std}(\mathbf{x}) = \frac{1}{\sqrt{N}}$. Let $M$ be the number of non-zero entries $x_i$ of $\mathbf{x}$. Then, there are only $N$ constellations for $\mathbf{x}$ with $M = 1$, and zero constellations with $M > 1$.
\label{lemma:std_reg}
\end{theorem}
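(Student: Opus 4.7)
The plan is to translate the three hypotheses into a single polynomial identity on the entries $x_i$, and then use the box constraint $x_i \in [0,1]$ to force each $x_i$ into $\{0,1\}$. First I would exploit $x_i \geq 0$ together with $\lVert \mathbf{x} \rVert_1 = 1$ to conclude $\sum_i x_i = 1$, which pins down the mean as $\mu = 1/N$. Substituting into the paper's definition $\text{std}(\mathbf{x}) = \sqrt{\frac{1}{N-1} \sum_i (x_i - \mu)^2}$ and squaring the assumed value $1/\sqrt{N}$ gives $\sum_i (x_i - 1/N)^2 = (N-1)/N$. Expanding the square and substituting $\sum_i x_i = 1$ collapses this to the clean identity $\sum_i x_i^2 = 1$.

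At this point the key observation is the pair of equalities $\sum_i x_i^2 = 1 = \sum_i x_i$. Because $0 \leq x_i \leq 1$, we have the entrywise inequality $x_i^2 \leq x_i$, with equality iff $x_i(1 - x_i) = 0$, i.e.\ iff $x_i \in \{0,1\}$. Summing gives $\sum_i x_i^2 \leq \sum_i x_i$, which must in fact be an equality by the previous step. Since a sum of nonnegative terms vanishes only when each term vanishes, we get $x_i \in \{0,1\}$ for every $i$. Combined with $\sum_i x_i = 1$, exactly one coordinate equals $1$ and the remaining $N-1$ coordinates equal $0$, so $M = 1$.

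Finally, I would count the admissible vectors: the single nonzero coordinate can sit in any of the $N$ positions, producing exactly the $N$ standard basis vectors $\mathbf{e}_1, \dots, \mathbf{e}_N$, and every one of them manifestly satisfies the three hypotheses. No vector with $M > 1$ can satisfy them, since any such vector would have at least one coordinate strictly between $0$ and $1$ (because $\sum x_i = 1$ and the nonzero entries lie in $(0,1]$ except in the trivial $M=1$ case), forcing $\sum x_i^2 < \sum x_i = 1$, in contradiction with $\sum x_i^2 = 1$.

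There is no serious obstacle here; the whole argument is an algebraic chain. The only subtle step is converting the assumed standard deviation into the identity $\sum_i x_i^2 = 1$, after which the box-constraint inequality $x_i^2 \leq x_i$ on $[0,1]$ does all the work. I would be slightly careful in the write-up to note that the $\ell^1$ constraint becomes an ordinary sum only because of the nonnegativity hypothesis, so the hypotheses are used in a specific order: nonnegativity plus $\ell^1$ to fix the mean, the std hypothesis to obtain $\sum x_i^2 = 1$, and the upper bound $x_i \leq 1$ to force the $\{0,1\}$ dichotomy.
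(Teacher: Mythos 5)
Your proof is correct and follows essentially the same route as the paper: both arguments use nonnegativity and the $\ell^1$ constraint to fix $\mu = 1/N$, expand the variance condition, and collapse everything to the identity $\sum_i x_i^2 = 1$. Your closing step --- the termwise inequality $x_i^2 \le x_i$ on $[0,1]$ with equality iff $x_i \in \{0,1\}$ --- makes explicit the justification the paper leaves implicit when it simply asserts that $\sum_i x_i^2 = 1$ forces $M=1$, and you additionally verify that the $N$ standard basis vectors indeed satisfy all three hypotheses, which the paper's count of ``$N$ constellations'' takes for granted.
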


\begin{proof}
\hfill
\begin{compactenum}[\hspace{0.25cm} 1. ]
\item We proof by contradiction and therefore assume\\$\exists\; M > 1 \;|\; 0 < x_i < 1 \; \forall \; i=1,\dots,M$.

\item Given the hypothesis $std(\mathbf{x}) = \frac{1}{\sqrt{N}}$ and $\lVert \mathbf{x} \rVert_1 = 1$, we~get
\begin{align*}
std(\mathbf{x}) = \frac{1}{\sqrt{N}} &= \sqrt{\frac{1}{N-1} \sum_i^N(x_i-\mu)^2}, \\
\frac{1}{N} &= \frac{1}{N-1} \sum_i^N(x_i-\mu)^2,\\
\frac{N-1}{N} &= \sum_i^N(x_i-\mu)^2
\end{align*}

\item where $\sum_i^N(x_i-\mu)^2$ can be split in two sets,\\ $S_0 = \{i \mid x_i = 0\},~S_{\alpha} = \{i \mid 0 < x_i < 1\}$ of vector $\mathbf{x}$
\begin{align*}
\frac{N-1}{N} &= \sum^M_{i \in S_\alpha}(x_i - \mu)^2 + \sum^{N-M}_{i \in S_0} \mu^2. \\
\mu = \frac{1}{N} \sum^N_{i} x_i &= \frac{1}{N} \left(\sum^M_{i \in S\alpha} x_i + \sum^{N-M}_{i\in S_0} 0\right) = \frac{1}{N}.
\end{align*}
\item Substituting for $\mu$, we can show that:
\begin{align*}
\frac{N-1}{N} &= \sum^M_{i \in S_\alpha}\left(x_i - \mu \right)^2 + \sum^{N-M}_{i \in S_0} \mu^2,\\
&= \sum^M_{i \in S_\alpha}\left(x_i - \frac{1}{N}\right)^2 + \sum^{N-M}_{i\in S_0} \left(\frac{1}{N}\right)^2\\
&= \sum^M_{i \in S_\alpha} \left(x_i^2 - 2 x_i \frac{1}{N} + \frac{1}{N^2} \right) + \frac{N-M}{N^2} \\
 &= \sum^M_{i \in S_\alpha} \left(x_i^2 \right) - \frac{2}{N}  \sum^{{M}}_{i \in S_\alpha} x_i + \frac{M}{N^2} + \frac{N-M}{N^2} \\
 &= \sum^M_{i \in S_\alpha} \left(x_i^2 \right) - \frac{2}{N}  \sum^{{M}}_{i \in S_\alpha} x_i + \frac{1}{N},
\end{align*}
\item where $\sum^{{M}}_i x_i = \lVert 1 \rVert_1 = 1$ was used in the last equation. It follows that
\begin{align*}
\frac{N-1}{N} &= \sum^M_{i \in S_\alpha} \left(x_i^2 \right) - \frac{1}{N},\\
1 - \frac{1}{N} &= \sum^M_{i \in S_\alpha} \left(x_i^2\right) - \frac{1}{N},\\
\sum^M_{i \in S_\alpha}(x_i^2) &= 1.
\end{align*}
\item The last equality is true if and only if $M=1$, proving the theorem.
\end{compactenum}
\end{proof}

An example can be given for the two dimensional case:

\begin{compactenum}[\hspace{0.25cm} 1. ]
\item Given the following equality of the constraints \mbox{$\sum_i^N(x_i^2) = \lVert \mathbf{x} \rVert_1 = 1$} and an example with $N=2$
\begin{align*}
\lVert \mathbf{x} \rVert_1 &= x_1 + x_2 = 1\\ 
x_1 &= 1 - x_2.
\end{align*}
\item Substituting for $x_1$, and solving the polynomial for $x_2$ we can show that
\begin{align*}
\sum_i^N(x_i^2) &= x_1^2 + x_2^2 = 1 \\
 &= (1 - x_2)^2 + x_2^2 = 1,\\
0 &= x_2^2 - x_2 ~ \Rightarrow x_{2_{1,2}}=\{ 0,1\}.
\end{align*}
\item Solving for $x_1$ given $\sum_i^N(x_i^2)$ and the solutions for $x_2$
\begin{align*}
x_{1_{1,2}} = \{ 1,0\} ~\text{given}~ x_{2_{1,2}}=\{ 0,1\},
\end{align*}
\item shows mutual exclusive boolean selectors and supports the proof that constraints are only fulfilled for $M=1$.
\end{compactenum}
%

\vspace{2mm}
A visual proof that the sets of values for the given constraints are only inclusive where the conditions mentioned above are met, is visualized by Figure~\ref{fig:std_loss}.
Experimental results, which show how this loss term improves the soft Boolean convergence, is presented by Figure~\ref{fig:experiment_convergence} (Sec.~\ref{sec:experiments}).\\

The full and final approach can be addressed with common non-linear solvers such as the Levenberg–Marquardt algorithm and numerically approximated Jacobians. This is beneficial due to the high variability of the system definition and its consecutive Jacobians.

\noindent The system definition of this approach is defined by
\begin{align}
f_\mathsf{sys}(\mathbf{x}_\mathsf{c},\mathbf{x}_\mathsf{m},\mathbf{\mathsf{b}}) =& 
    {b}_{\mathbf{p}_\mathsf{sr}} \, \mathbf{p}_\mathsf{sr}(\mathbf{x}) +
    {b}_{\mathbf{p}_\mathsf{rs}} \, \mathbf{p}_\mathsf{rs}(\mathbf{x})
    + {b}_{\mathbf{R}_\mathsf{rs}} \,{\boldsymbol{\omega}_\mathsf{rs}}(\mathbf{x}) \label{eq:sys_group_modules} \\  
    + {b}_{\mathbf{R}_\mathsf{sr}} \,{\boldsymbol{\omega}_\mathsf{sr}}(\mathbf{x}) +& {b}_{\mathbf{v}_\mathsf{ws}} \, \mathbf{v}_\mathsf{ws}(\mathbf{x}) +
    {b}_{\mathbf{v}_\mathsf{s}} \, \mathbf{v}_\mathsf{s}(\mathbf{x}) +
    {b}_{\mathbf{m}_\mathsf{s}} \, \mathbf{m}_\mathsf{s}(\mathbf{x}) \notag\\
\mathcal{L}_\mathsf{loss}(\mathbf{\mathsf{b}}) =& \lambda_\mathsf{n} \, \mathcal{L}_\mathsf{norm}(\mathbf{\mathsf{b}}) 
 + \lambda_\mathsf{p} \, \mathcal{L}_\mathsf{pos}(\mathbf{\mathsf{b}}) \label{eq:sys_loss_group}\\
+& \lambda_\mathsf{s} \, \mathcal{L}_\mathsf{std}(\mathbf{\mathsf{b}}) + 
\lambda_\mathsf{v} \, \mathcal{L}_\mathsf{vec}(\mathbf{\mathsf{b}})^2 \notag
\end{align}
\begin{equation}
\min_{{\mathbf{x}_\mathsf{m}, \mathbf{\mathsf{b}}}}~y =
\frac{1}{2}  \frac{
\sum_{i} \left\lVert f_\mathsf{sys}({\mathbf{x}_\mathsf{c_i},\mathbf{x}_\mathsf{m_i},\mathbf{\mathsf{b}}}) - {}^t\mathbf{S}_i\right\rVert
}{
\sqrt{n_\mathsf{samples}}
} + \mathcal{L}_\mathsf{loss}({\mathbf{\mathsf{b}}})
\end{equation}%
%
with the over-determined system definition $f_\mathsf{sys}$, previously introduced constraints $\mathcal{L}_\mathsf{loss}$, individual weights~$\lambda_*$, $\mathbf{S}$ as the unknown sensor signal, and $n_\mathsf{samples}$ as the number of sample points to render the system error invariant to the length of a dataset.

In this case, the norm of $\mathbb{R}^{3\times n}$ is the Frobenious norm. By normalizing the Frobenious norm based on $\sqrt{n_\mathsf{samples}}$, we obtain the RMSE, which is agnostic to the number of samples.

Since the system components are normalized for this approach, the state values of the selected module are unscaled and need to be refined, which is the purpose of stage~2~(Sec.~\ref{sec:sensor_state_calibration}).

An additional note on module similarities:
By examining Eq.~\eqref{eq:mag_detail} and \eqref{eq:rot_detail}, it can be seen that the result for both expressions is a vector in $\mathbb{R}^{3}$.
An issue can arise if the rotations defined by \eqref{eq:rot_detail} do not fully minimize the error and express $\log(\mathbf{R}_\mathsf{sr})^\vee = \boldsymbol{\omega}_\mathsf{sr}$ incorrectly.
For this scenario, a multiplicative vector could also minimize the error and contribute to the solution due to overfitting or over-determination. In this case, Eq.~\eqref{eq:mag_detail} can double as a solution for a rotation sensor module.
\begin{align}
\mathbf{m}_\mathsf{s} &= \mathbf{R}_{\mathsf{is}}^\mathsf{T} ~ {}^t\mathbf{R}_{\mathsf{wi}}^\mathsf{T} ~ \mathbf{m}_\mathsf{w} \label{eq:mag_detail} \\
\log(\mathbf{R}_\mathsf{sr})^\vee &= \log(\mathbf{R}_\mathsf{is}^\mathsf{T} ~ {}^t\mathbf{R}_\mathsf{wi}^\mathsf{T} ~ \mathbf{R}_\mathsf{rw}^\mathsf{T})^\vee \label{eq:rot_detail}
\end{align}
This issue can be circumvented by enforcing additional constraints on $\mathbf{m}_\mathsf{w}$. This is the reason why we decided to fully isolate rotation signal identification from all other sensor module expressions.
Besides the issue described above, the main reason for this decision is that rotational signals need to be expressed in the tangent space and require additional conversion to $\mathbb{R}^{3}$ before the detection in stage~1 can be performed.
The detection and refinement of calibration states for rotational information are still of interest to define the nature of the measurement expression (normal or inverse) and the calibration states.

Defining the weights for loss terms is often a heuristic approach. However, the weights for this system setup (Eq.\eqref{eq:sys_loss_group}) are defined such that all initial loss terms result in approximately the same range of magnitude. Weights for the presented setups are $\lambda_\mathsf{p} = 200, \lambda_\mathsf{n} = 50, \lambda_\mathsf{v} = 100, \lambda_\mathsf{s} = 20$, and the development of weighted loss terms throughout the optimization process is shown by Figure~\ref{fig:experiment_convergence}.

Finally, the decision-making process is straightforward. Given the final result of the selectors, $\mathbf{b_\mathsf{vec}}$, the soft Boolean with the highest magnitude is selected to determine a candidate for the model choice.
Since we know that the problem definition is not convex, we need to ensure that the choice is not a false positive before proceeding to the next stage.
Thus, a health metric, based on the distance of the soft Boolean with the highest and second highest magnitude, together with the remaining loss terms that enforce the soft Boolean properties, are used to identify false positives.
This metric is detailed and validated in the experiment Section~\ref{sec:healt_metric}.

Overall, this approach reaches a final decision while preventing false positive decisions due to noise and uncertainties of the provided input signals and module cross-detections due to signal attenuation. The scalability of this approach is also improved with respect to the single-state selection because of minimized system inter-dependencies.

\begin{figure}[!tb]
    \centering
    \includegraphics[width=0.7\linewidth, trim={0 0mm 0 0mm},clip]{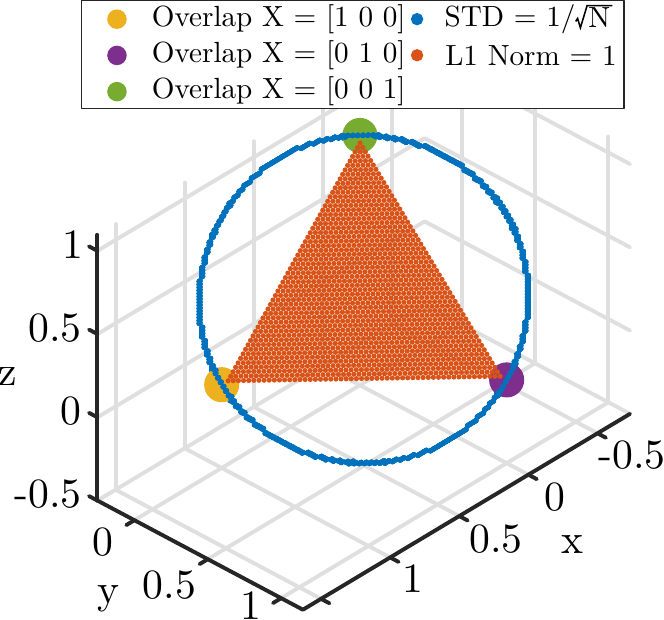}
    \caption{Visual proof of Theorem~\ref{lemma:std_reg}}
    \label{fig:std_loss}
    \vspace{-6mm}
\end{figure}

\subsection{Sensor State Calibration}
\label{sec:sensor_state_calibration}

This section details the process of stage~2 (see Fig.~\ref{fig:method_outline}) concerning the estimation of sensor state calibrations and the necessity of reference frames for dedicated sensor modules. In general, the same sensor model definitions that are used for stage~1 are used for stage~2, with the difference that only the previously detected sensor model is selected and processed.
While the unknown measurement input and the estimated measurement were normalized in stage~1, e.g., for noise resilience, stage~2 requires the raw system information for the determination of calibration states.

Since the problem definition is non-convex, the initialization point of optimization parameters is vital for correct convergence. We performed tests that showed that a naive initialization of these states could lead to incorrect results.
A more robust initialization and convergence is given if the states are initialized with the final parameters from stage~1. These parameters are obtained with the normalized system error definition in stage~1, possibly leading to higher resilience to local minima due to lower gradients in the optimization landscape. Experiment Section~\ref{sec:ref_frame_determination} shows corresponding results.

The system error definition for rotations differs for stage~1 and stage~2. As introduced earlier, the rotations for the calculation of the system error are expressed in the tangent space. In this space, the expression of rotations is not unique and reoccurring.
Since these parameters are not normalized in stage~2, it needs to be assured that the repetitiveness of values expressing the same rotations is consistent and comparable for the system input and the estimated measurement. Thus, the error between two series of rotations $\boldsymbol{\omega}_\mathsf{a}$ and $\boldsymbol{\omega}_\mathsf{b}$ for rotation models in stage~2 is defined as follows:
\begin{equation}
   \epsilon_\mathsf{rot} =  \frac{ \sum\limits_{i} \left\lVert \log(\exp(\boldsymbol{\omega}_{{A_i}}^\wedge) ~ \exp(\boldsymbol{\omega}_{{B_i}}^\wedge)^\transpose)^\vee \right\rVert_2}{N},
\end{equation}
this ensures that rotations are unique trough rotation matrices in $SO\left(3\right)$ when determining the error.

It also needs to be noted that the states of the models need to be observable. This concerns the reference frames for the rotation and the inverse position sensors.
Using Figure~\ref{fig:sensor_frames} as a reference; for rotation modules, the reference frame rotation $\mathbf{R}_\mathsf{rw}$ is unobservable and needs to be fixed in order to obtain a valid calibration for $\mathbf{R}_\mathsf{is}$.
For the inverse position module, it can be seen that the sensor reference frame~\rframe{R} is not constrained by a rotation. Thus, the translation and rotation of the reference frame are codependent, and there are multiple combinations for $\mathbf{p}_\mathsf{rw}$ and $\mathbf{R}_\mathsf{rw}$ that are expressing a valid solution. 

The second step within stage~2 is the reference frame determination. Until this point, the model detection and estimation of initial calibration states assumed that sensors require a reference frame. The last step in the model and state identification is to determine if a reference frame is required.
This is important information for consecutive steps where the detected sensor model is parameterized and used by a recursive filter, allowing for sensor self-calibration. If a reference frame is not required, it is beneficial not to include it.

The reference frame is defined by $\mathbf{p}_\mathsf{rw}$ and $\mathbf{R}_\mathsf{rw}$. We use the LASSO approach and the L1-Norm as a penalty for the reference frame. The following loss is added to the optimization process for reference frame detection, while the sensor to IMU calibration is also subject to optimization,
\begin{equation}
   \mathcal{L}_\mathsf{ref} =  \left\lVert \mathbf{p}_\mathsf{rw} \right\rVert_1 +  \left\lVert \boldsymbol{\omega}_\mathsf{rw} \right\rVert_1.
\end{equation}
$\mathcal{L}_\mathsf{ref}$ represents a non-minimizable loss term if a reference frame is required. Estimating the calibration state itself is a property of the individual sensor model definition in Equation~\eqref{eq:sys_group_modules}.
This process allows for clear decisions on the existence of reference frames based on the remaining magnitude of the reference states. Experimental results are shown in Section~\ref{sec:ref_frame_determination}.
%

\section{Experiments}
\label{sec:experiments}
This section investigates the reliability and properties of the presented method. Initial experiments are performed on ten simulated Lissajous figures, perturbed with various noise settings to analyze the sensitivity of the approach (Sec.~\ref{sec:sensitivity}) and reliability of the introduced health metric (Sec.~\ref{sec:healt_metric}) on 1750 case scenarios for statistical significance.
In addition, the approach is applied within a real-world scenario to prove its applicability (Sec.~\ref{sec:experiment_real_world_uav} and \ref{sec:experiment_real_world_ground_vehicle}).

The simulated trajectories have the following stats: Range in position~$0\text{-}\SI{3.5}{\meter}$, max. velocity~\SI{3.5}{\meter\per\second}, roll and pitch angles within~$\pm\SI{30}{\degree}$, yaw~$\pm\SI{30}{\degree}$, lateral max. acceleration~$\SI{1.7}{\meter\per\second\squared}$, and max. angular velocity~\SI{3.2}{\radian\per\second}.
Experiments for the sensitivity analysis add specific noise to individual system inputs. Unless stated otherwise, the following noise parameters are used for a realistic setup. 
For the general approach, we assume that core state information is provided by an estimator. However, for the simulated experiments, we are using ground~truth data with additive noise to provide core states with relatively noisy data. This ensures repeatability and consistency throughout all simulated cases.
Please keep in mind that these are \mbox{1-sigma} values. The corresponding \mbox{3-sigma}, especially for the rotation, is intended to prove that this method also works with high/sufficient noise:

\textit{System input:}
Position~$\SI{0.1}{\meter}$,
velocity~$\SI{0.05}{\meter\per\second}$,
orientation~$\SI{2}{\degree}$,
angular velocity~$0.014~deg/s/\sqrt{hz}$ (continuous) as for the BMI055 MEMS IMU, used e.g. by the PX4 autopilot.
\textit{Measurements:}
Position~$\SI{0.3}{\meter}$,
Rotation~$\SI{3}{\degree}$ axis angle rotation,
Velocity~$\SI{0.05}{\meter\per\second}$,
Magnetometer~$\SI{6}{\degree}$ axis angle rotation.

\subsection{Sensitivity Analysis and Validation}
\label{sec:sensitivity}

This section details a sensitivity analysis of the proposed method and the system definition to gain insights into robustness and noise resilience. The presented approach is designed to identify a single sensor modality at a time and therefore, the following evaluation is performed for individual sensor models.
The failure points and reported false selections are direct outcomes of stage~1 and are not evaluated in conjunction with the presented health metric~(Sec.\ref{sec:healt_metric}).
In the following, Tables~\ref{tab:sensitivity_meas_noise}, \ref{tab:sensitivity_snr_collective}, and \ref{tab:sensitivity_snr_individual} show the results for three different sensitivity analysis aspects by evaluating successful and failed detections with respect to different noise levels. The setup and objective for each experiment is detailed in the following. However, all experiments use the same ten individual trajectories and the tables detail the number of false detections out of the ten trails each.
It is expected that low noise profiles cause no false detections and, e.g., unrealistically high noise perturbations cause false detections. High noise scenarios can also be interpreted as sensor failure and data with a significant number of outliers. In such cases, it is important that failures are detected reliably. A different data segment can then be used for re-evaluation.

This analysis follows two approaches, the first evaluation is done with reasonable noise for the system inputs (core states) and gradually increasing sensor measurement noises.
This relates the analysis to real-world applications and intuitively interpretable results. Table~\ref{tab:sensitivity_meas_noise} shows the results when increasing the noise levels until a respective sensor type fails.
It can be seen that the position and rotation detections hardly fail despite the comparatively high noise levels. This is due to the characteristics of the measurement types, the loss terms that prevent over-fitting and cross-detection, as well as reasonable uncertainty of the core states. As a reference to the health metric in the next section, none of the position and rotation selections, which have been correctly selected, were close to the decision threshold.

\begin{table}[htbp]
    \footnotesize
    \ra{0.5}
    \caption{Variation of the sensor measurement noise given a reasonable uncertainty of the system states (Sec.\ref{sec:experiments}).
    The table shows the number of false decisions for 10 different trajectories.
    High noise values are intended to result in false choices to show the sensitivity of the system.}%
    \label{tab:sensitivity_meas_noise}
    \vspace{-2mm}
    \begin{center}
        \begin{tabular}{ l | P{12px} P{12px} P{12px} P{12px}  }
        \toprule 
        \textbf{Sensor \& Noise} & \multicolumn{4}{c}{\footnotesize False selections in 10 trails}\\
        \midrule
        \textbf{Noise} $[m]$  & \textbf{3} & \textbf{6} & \textbf{8} & \textbf{10}  \\
        Position     & 0 & 0 & 0 & \cellcolor{t2}2  \\
        Inv Position & 0 & 0 & 0 & 0  \\
         \midrule
         \midrule
        \textbf{Noise} axis angle $[deg]$ & \textbf{10} & \textbf{45} & \textbf{90} & \textbf{135} \\
        Rotation      & 0 & 0 & 0 & 0 \\
        Inv Rotation  & 0 & 0 & 0 & 0 \\
         \midrule
         \midrule
        \textbf{Noise} $[m/s]$ & \textbf{0.5} & \textbf{1.5} & \textbf{2} & \textbf{3}  \\
        World Velocity      & 0 & 0 & \cellcolor{t2}2 & \cellcolor{t8}8  \\
        Body Velocity & 0 & 0 & \cellcolor{t3}3 & \cellcolor{t10}10  \\
          \midrule
          \midrule
        \textbf{Noise} axis angle $[deg]$ & \textbf{10} & \textbf{45} & \textbf{90} & \textbf{135} \\
        Magnetometer & 0 & 0 & 0 & \cellcolor{t3}3 \\
        \bottomrule
        \end{tabular}
    \end{center}
    \vspace{-5mm}
\end{table}

The second approach relates the input noise to the variation of the trajectory and thus provides a more comparable insight into the sensitivity of the system.
For a fair validation, we need to establish a common ground for the comparison of various trajectories and corresponding changes for the input data.
For this, we utilize a definition of the \textit{signal-to-noise ratio} (SNR).
Since the SNR is a local ratio between the power of a signal $m$ and the power of its noise $\sigma$, the standard SNR metric $\text{SNR} = \frac{m}{\sigma}$ would be impractical given robotics applications because the power of the noise would need to change with the power of the signal. 

Sensors in robotics usually have a magnitude-independent noise. Thus, we are using a modified definition of the SNR and use the standard deviation $\lambda$ of the trajectory (system input) as a metric for the variation of the signal and the power of the noise $\text{SNR} = \frac{\lambda(\mathbf{x})}{\sigma_n}$ with $\lambda(\mathbf{x}) = \sqrt{ \frac{1}{N}  \sum_{i}(x_i - \mu)^2}$. This allows the generation of noise-afflicted data with the same SNR for multiple test trajectories. The noise for each trajectory will be different according to this definition.

While the noise for translations and vectors is additive, the noise for rotations and the magnetic vector $\mathbf{m}_\mathsf{w}$ are generated using the definition of the axis angle rotation based on the SNR $\alpha$ which, in the presented case, affects each axis equally
\begin{align}
\alpha &= \lVert \boldsymbol{\omega} \rVert = \sqrt{\omega_x^2+\omega_y^2+\omega_z^2},  \\
\alpha^2 &= 3~\omega_i^2 \text{~for equal effect on each axis},\\
\omega_i &=\frac{\alpha}{\sqrt{3}}.
\end{align}

Table~\ref{tab:sensitivity_snr_collective} shows the results of a setup in which all system inputs are noise afflicted with the same SNR to the point of failure, done for all sensor model inputs individually. The results show that most model detection fails at $\textrm{SNR}=0.5$, which determines that the noise floor is twice the magnitude of the variation of the input signal.

\begin{table}[htbp]
    \footnotesize
    \ra{0.5}
    \caption{Variation of the trajectory dependent SNR for all system inputs, the known states, and unknown sensor measurements, for a fair comparison across trajectories.}%
    \vspace{-3mm}
    \label{tab:sensitivity_snr_collective}
    \begin{center}
        \begin{tabular}{ l | P{8px} P{8px} P{8px} P{8px} P{8px}}
        \toprule 
        \textbf{SNR} & 2 & 1.5 & 1 & 0.5 & 0.4 \\
        \toprule
        \textbf{Sensor} & \multicolumn{5}{c}{ \footnotesize False selections in 10 Trails }\\
        \midrule
        Position     & 0 & 0 & 0 & 0 & 0 \\
        Inv Position & 0 & 0 & 0 & \cellcolor{t1}1 & \cellcolor{t1}1  \\
        Rotation     & 0 & 0 & 0 & \cellcolor{t2}2 & \cellcolor{t5}5  \\
        Inv Rotation & 0 & 0 & 0 & \cellcolor{t2}2 & \cellcolor{t2}2  \\
        World Velocity     & 0 & 0 & \cellcolor{t2} 2 & \cellcolor{t10} 10 & \cellcolor{t10}10  \\
        Body Velocity& 0 & 0 & \cellcolor{t2} 2 & \cellcolor{t9} 9 & \cellcolor{t7}7 \\
        Magnetometer & 0 & 0 & 0 & \cellcolor{t3}3 &\cellcolor{t3}3  \\
         \bottomrule
        \end{tabular}
    \end{center}
    \vspace{-4mm}
\end{table}

Finally, Table~\ref{tab:sensitivity_snr_individual} shows a more elaborate analysis with noise perturbations and all system parameters individually over a range of SNRs and all individual sensor signals as inputs.
It can be seen that the majority of detections fail if the noise for the measurement itself is high (SNR $<0.5$, column $\mathsf{m}$). Furthermore, the magnetometer and body velocity detection is sensitive to high noise on core state rotation (SNR $<0.5$, column $\mathsf{r}$), and the velocity is slightly sensitive to high core state velocity noise (SNR $<0.4$, column $\mathsf{r}$).

All tests concerning sensitivity show that position and rotation detections are generally more robust, which is due to their characteristics, as described in the beginning.

\begin{table}[htbp]
    \scriptsize
    \ra{0.3}
    \setlength\tabcolsep{3.1pt}
    \caption{Individual SNR perturbations on all system inputs while retaining respective other inputs at reasonable uncertainty.
    This experiment aims to visualize the sensitivity of the posed sensor models (rows) towards noise perturbations on core states (columns). System inputs: Position $p$, velocity $v$, rotation $r$, and angular velocity $\omega$ as well as measurement input $m$.}%
    \label{tab:sensitivity_snr_individual}
    \vspace{-1mm}
    \begin{center}
        \begin{tabular}{ l | P{7px}P{7px}P{7px}P{7px}P{7px}  |P{7px}P{7px}P{7px}P{7px}P{7px} | P{7px}P{7px}P{7px}P{7px}P{7px} }
        \textbf{SNR} &  \multicolumn{5}{c|}{1} & \multicolumn{5}{c|}{0.5}& \multicolumn{5}{c}{0.4} \\
        \toprule
        \textbf{Input} & $p$ & $v$ & $r$ & $\omega$ & $m$  & $p$ & $v$ & $r$ & $\omega$ & $m$  & $p$ & $v$ & $r$ & $\omega$ & $m$  \\
        \midrule
        \textbf{Sensor} & \multicolumn{15}{c}{Number of false selections in 10 trails}\\
        \midrule
        Position         & 0 & 0 & 0 & 0 & 0  & 0 & 0 & 0 & 0 & 0  & 0 & 0 & 0 & 0 & 0 \\
        Inv Position     & 0 & 0 & 0 & 0 & 0  & 0 & 0 & 0 & 0 & 0  & 0 & 0 & 0 & 0 & 0 \\
        Rotation         & 0 & 0 & 0 & 0 & 0  & 0 & 0 & 0 & 0 & \cellcolor{t2}2  & 0 & 0 & 0 & 0 & 0 \\
        Inv Rotation     & 0 & 0 & 0 & 0 & 0  & 0 & 0 & 0 & 0 & 0  & 0 & 0 & 0 & 0 & \cellcolor{t1}1 \\
        World Velocity         & 0 & 0 & 0 & 0 & 0  & 0 & \cellcolor{t3}3 & 0 & 0 & \cellcolor{t3}3  & 0 & \cellcolor{t6}6 & 0 & 0 & \cellcolor{t9}9 \\
        Body Velocity    & 0 & 0 & 0 & 0 & 0  & 0 & \cellcolor{t1}1 & \cellcolor{t7}7 & 0 & \cellcolor{t2}2  & 0 & \cellcolor{t2}2 & \cellcolor{t8}8 & 0 & \cellcolor{t6}6 \\
        Magnetometer     & 0 & 0 & 0 & 0 & 0  & 0 & 0 & \cellcolor{t3}3 & 0 & 0  & 0 & 0 & \cellcolor{t7}7 & 0 & 0 \\
         \bottomrule
        \end{tabular}
    \end{center}
    \vspace{-3mm}
\end{table}

Both approaches serve a specific purpose; the first approach applies to real-world scenarios and to which point a noise-afflicted sensor signal can be detected, given that the system's state is reasonably accurate. The second approach uses comparable noise settings for all trajectories and individually challenges the overall system to the point of failure and reveals the sensitivity of individual components toward one another.

As an additional validation element, Figure~\ref{fig:experiment_convergence} visualizes the effect of the STD loss term and the convergence of the loss functions.

In the following, we are introducing two examples for the velocity sensor identification from the larger analysis shown by Table~\ref{tab:sensitivity_meas_noise}. Using the same trajectory (see Fig.~\ref{fig:lissajous_traj_example_3d}, \ref{fig:lissajous_traj_example_2d}), we are showing one case, which fails because of the predominant measurement noise setting of \SI{3}{\meter\per\second} (false selection case indicated by the table), and one successful detection using \SI{1.5}{\meter\per\second} measurement noise.

\begin{figure}[!h]
    \vspace{-3mm}
    \centering
    \includegraphics[width=1\linewidth, trim={0 0mm 0 0mm},clip]{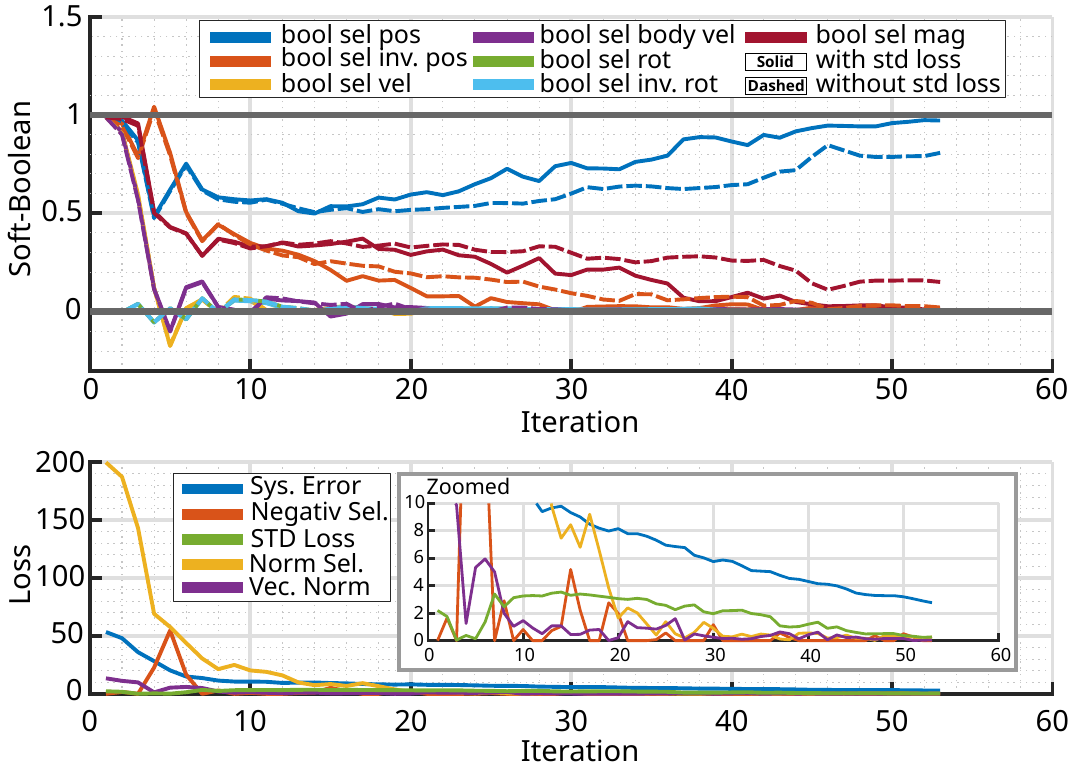}
    \caption{Experimental comparison and further proof of Theorem~\ref{lemma:std_reg} for the STD loss term, which enforces a single model choice.
    The data for this example is perturbed with noise to promote a case that tends towards an ambiguous decision.
    \textbf{Top:}~Convergence of soft Booleans with (solid) and without (dashed) the STD loss term.
    It can be seen that a fully accurate decision is not reached if the loss term is not enabled (Position Model: $0.8$, Magnetometer: $0.2$).
    This further confirms the exact scenario of possible over-fitting (Sec.~\ref{sec:method}) if dedicated loss terms are not defined.
    \textbf{Bottom:}~Convergence of the system error and all loss terms for the case where the STD loss is used.
    }
    \label{fig:experiment_convergence}
\end{figure}

\begin{figure}[!h]
    \centering
    \includegraphics[width=1.0\linewidth, trim={0 0mm 0 0mm},clip]{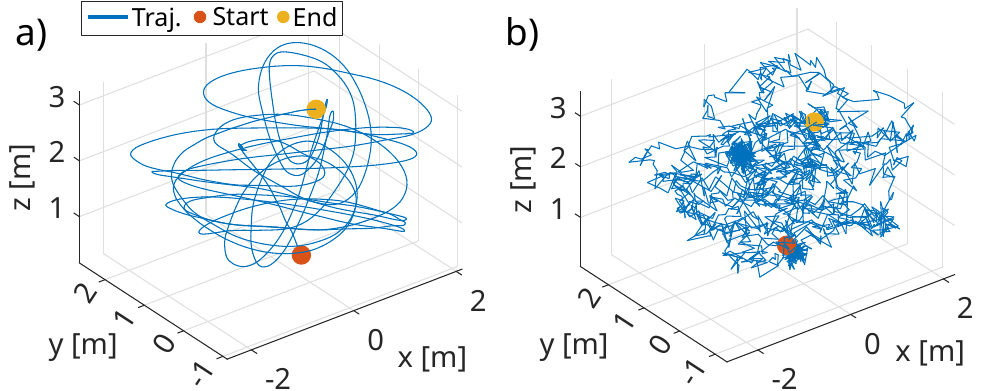}
    \caption{3D Lissajous trajectory for one of ten test datasets. Figure a) without noise on the system input (core states) and Figure b) with noise, as defined in the introduction of Sec.~\ref{sec:experiments}.}
    \label{fig:lissajous_traj_example_3d}
\end{figure}

\begin{figure}[!h]
    \centering
    \includegraphics[width=0.97\linewidth, trim={0 0mm 0 0mm},clip]{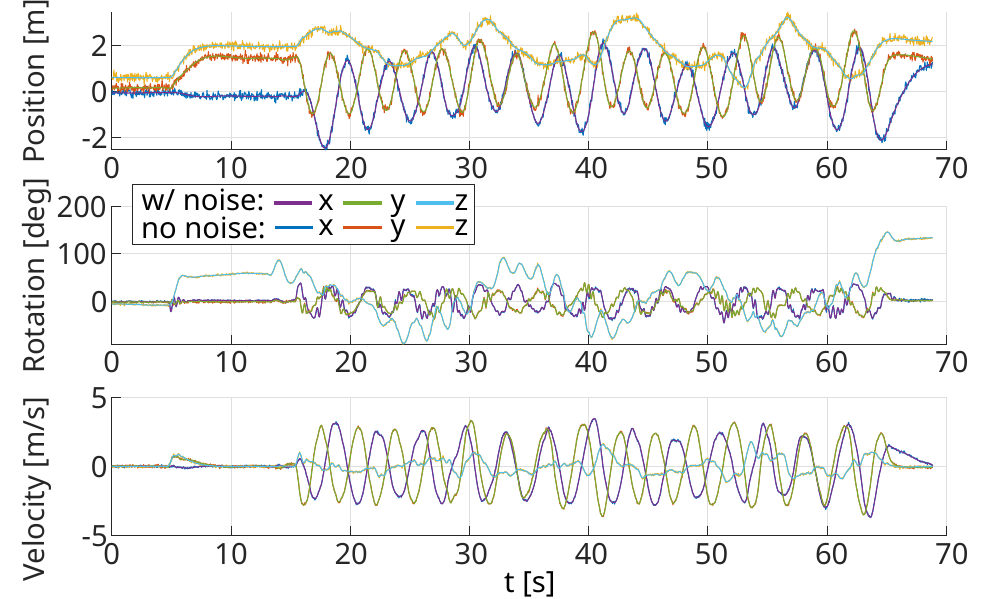}
    \vspace*{-1mm}
    \caption{2D position, rotation and velocity, corresponding to Figure~\ref{fig:lissajous_traj_example_3d}, as the base localization information for the vehicle.}
    \label{fig:lissajous_traj_example_2d}
\end{figure}

\begin{figure}[!h]
    \centering
    \vspace{-4mm}
    \includegraphics[width=0.97\linewidth, trim={0 0mm 0 0mm},clip]{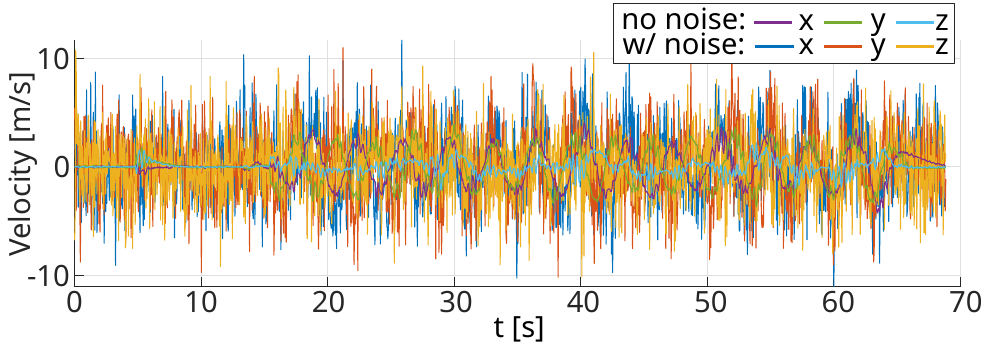}
    \vspace*{-1.5mm}
    \caption{Ground truth velocity measurement in relation to the noise-perturbed signal ($\sigma = \SI{3}{\meter\per\second}$), causing an incorrect model identification.}
    \label{fig:sim_vel_measurement}
\end{figure}

\begin{figure}[!h]
\vspace{-5mm}
    \centering
    \includegraphics[width=0.97\linewidth, trim={0 0mm 0 0mm},clip]{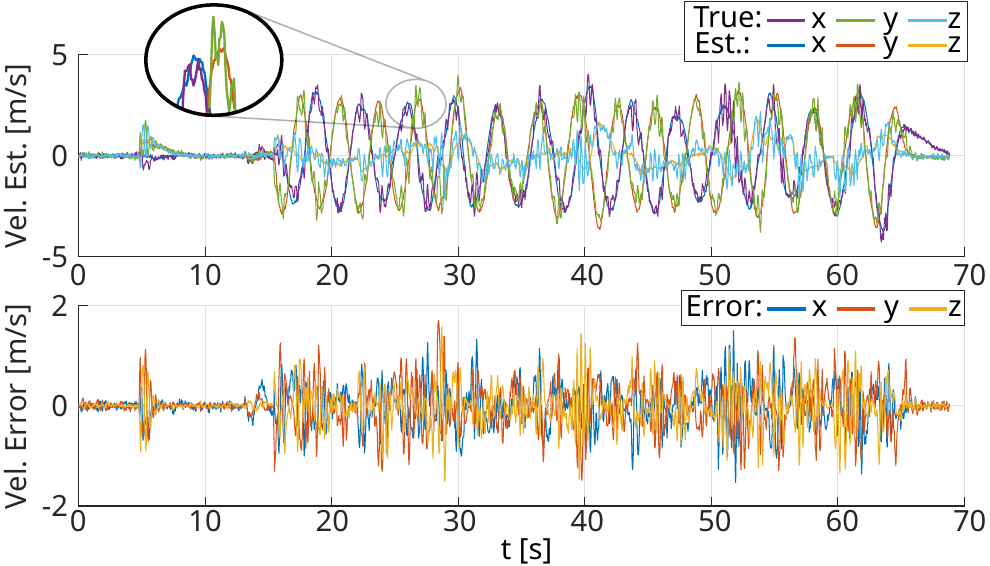}
    \vspace*{-1mm}
    \caption{Reprojection of the incorrectly identified position sensor model onto the ground truth velocity measurement. The estimated model loosely resembles the true signal but shows strong oversimplifications as highlighted by the zoomed section and the error plot.}
    \label{fig:sim_vel_ident_failed}
\end{figure}

\begin{figure}[!h]
    \vspace{-5mm}
    \centering
    \includegraphics[width=0.97\linewidth, trim={0 0mm 0 0mm},clip]{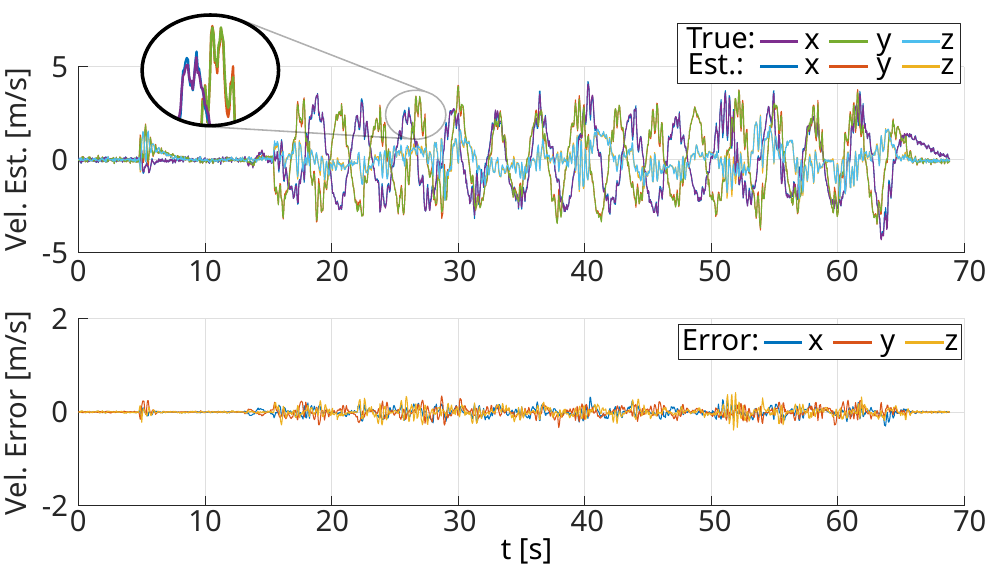}
    \vspace*{-1mm}
    \caption{Reprojection of the correctly identified and parameterized velocity sensor model. The estimated data matches the true signal closely, confirmed by the error plot with a significantly lower magnitude compared to Figure~\ref{fig:sim_vel_ident_failed}.}
    \label{fig:sim_vel_ident_success}
\end{figure}

\clearpage

Figure~\ref{fig:sim_vel_measurement} shows the velocity measurement for the failing case and relates the noise-perturbed measurement to the ground truth signal.
In this case, the sensor model was incorrectly identified as a position sensor, Figure~\ref{fig:sim_vel_ident_failed} shows that the reprojected measurement, based on the model and estimated parameterization resembles the true value to some extent. However, the zoomed element shows that the signal is strongly oversimplified. To the contrary, Figure~\ref{fig:sim_vel_ident_success} for the successful detection shows that the estimated signal matches the ground truth of the measurement closely. Comparing the error plots of both cases also shows a significant difference in magnitude.

\begin{table}[!h]
\setlength\tabcolsep{3pt}
    \small
    \ra{0.8}
    \caption{Results of the two example identification cases shown by Fig.~\ref{fig:sim_vel_ident_failed} and \ref{fig:sim_vel_ident_success}. Section~\ref{sec:healt_metric} will show that the introduced health metric can detect the false model selection reliably.
    }%
    \vspace{-3mm}
    \label{tab:validity_rmse}
    \begin{center}
        \begin{tabular}{ l c c c c c c c}
        & & & \multicolumn{2}{c}{\textbf{Soft Boolean}} & \multicolumn{3}{c}{\textbf{RMSE}} \\
        \cmidrule(lr){4-5} \cmidrule(lr){6-8}
        & \textbf{Noise} & \textbf{Model Sel.} & Selector  & Distance & x & y & z \\ 
         \midrule
         \textbf{Case 1} & \SI{3}{\meter} & Pos. & 0.548 & \textbf{0.082} & 3.32 & 3.33 & 3.39  \\ 
         \midrule
         \textbf{Case 2} & \SI{1.5}{\meter} & Vel. & 0.564 & \textbf{0.403} & 1.51 & 1.48 & 1.49 \\ 
         \bottomrule
        \end{tabular}
    \end{center}
    \vspace{-8mm}
\end{table}

\vspace{3mm}
Table~\ref{tab:validity_rmse} presents the results for both scenarios, soft Boolean values, the selected model, as well as the RMSE for the reprojection of the sensor model with the estimated parameterization. The next section will introduce a metric to identify false positives and show that such failure cases can be detected and rejected reliably.
We will show that the distance between the highest and second highest boolean is one of the main parameters for this identification and that the example case presented here, with a false detection and a Boolean distance of 0.082 would be rejected accordingly.

\vspace{-2mm}
\subsection{Health Metric}
\label{sec:healt_metric}

\begin{figure}[!b]
    \centering
    \vspace{-5mm}
    \includegraphics[width=1\linewidth, trim={0 0mm 0 0mm},clip]{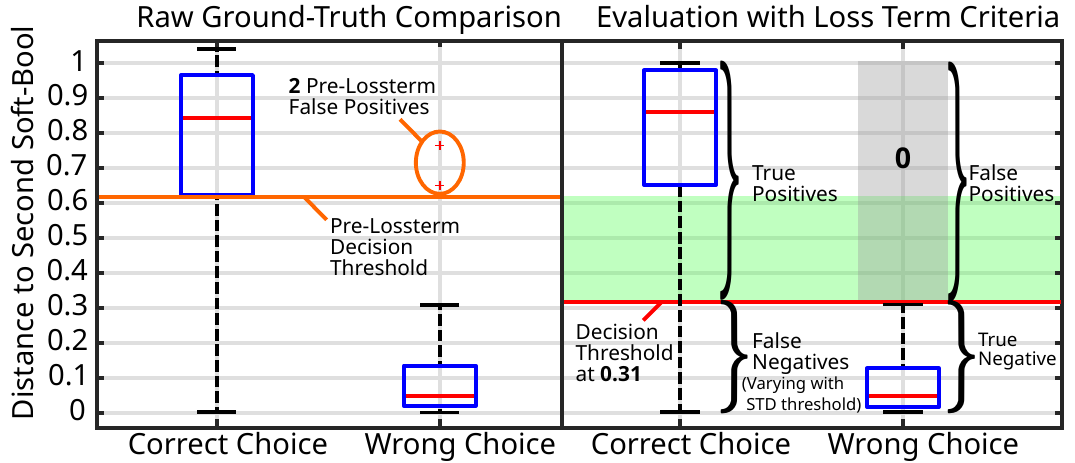}
    \caption{Box plot for soft Boolean distances $\Delta \mathsf{b}$ (difference between highest and second highest) for correct and wrong model selections based on ground truth.
    \textbf{Left:}~Raw samples, 1690 correct and 60 incorrect selections with two false positives for the wrong choice category within 1750 samples. Figure~\ref{fig:loss_scatter_matrix} is used to determine thresholds for loss parameters in order to remove such outliers.
    \textbf{Right:}~Samples with removed false positives based on remaining loss term limits. The green area shows a decision range with corresponding trade-offs in precision and recall (Fig.~\ref{fig:precision_recall_traditional}, \ref{fig:precision_recall}). The best recall ($91\%$) at $100\%$ precision is reached with the \textbf{red} Bool-Decisions threshold $\Delta \mathsf{b} < 0.31$, $\mathcal{L}_\mathsf{norm} < 0.2$, and $\mathcal{L}_\mathsf{std} < 4.1$.
    \textbf{Data:}~1750 samples, from sensitivity analysis (Tab.~\ref{tab:sensitivity_snr_individual}). Based on the health metric, 603 samples are rejected, with no remaining false positives. Please keep in mind that selected samples are intended to fail due to high noise perturbations on the system inputs. \looseness=-1
    }
    \label{fig:false_positives_box_plot}
\end{figure}

\begin{figure}[!b]
    \centering
    \includegraphics[width=1.0\linewidth, trim={0 0mm 0 0mm},clip]{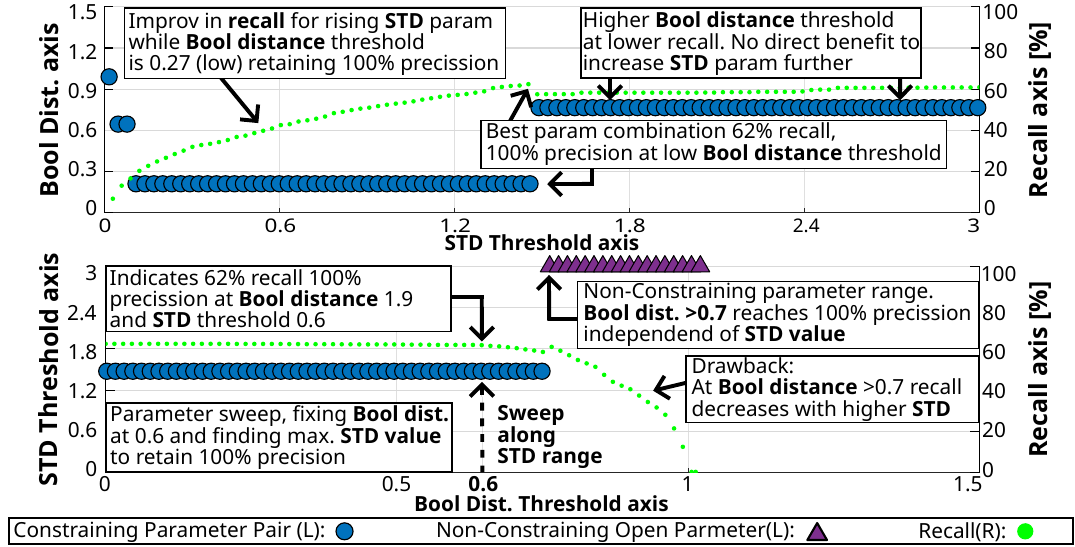}
    \caption{Detailing of Figure~\ref{fig:loss_scatter_matrix}, shows the parameter evaluation of Boolean distance threshold against STD threshold (bottom), and STD threshold against Boolean distance (top).
    The objective is to find the threshold with the lowest Boolean distance and highest STD value to reach the highest recall at \SI{100}{\percent} precision for the identification of false positives.
    Figure~\ref{fig:loss_scatter_matrix} shows this evaluation schematic for all parameters. Figure~\ref{fig:precision_recall_traditional} and~\ref{fig:precision_recall} show a traditional precision-recall graph, further refining the most meaningful parameters.}
    \label{fig:loss_scatter_matrix_explained}
\end{figure}

\begin{figure*}[t]
    \centering
    \includegraphics[width=1.0\linewidth, trim={0 0mm 0 0mm},clip]{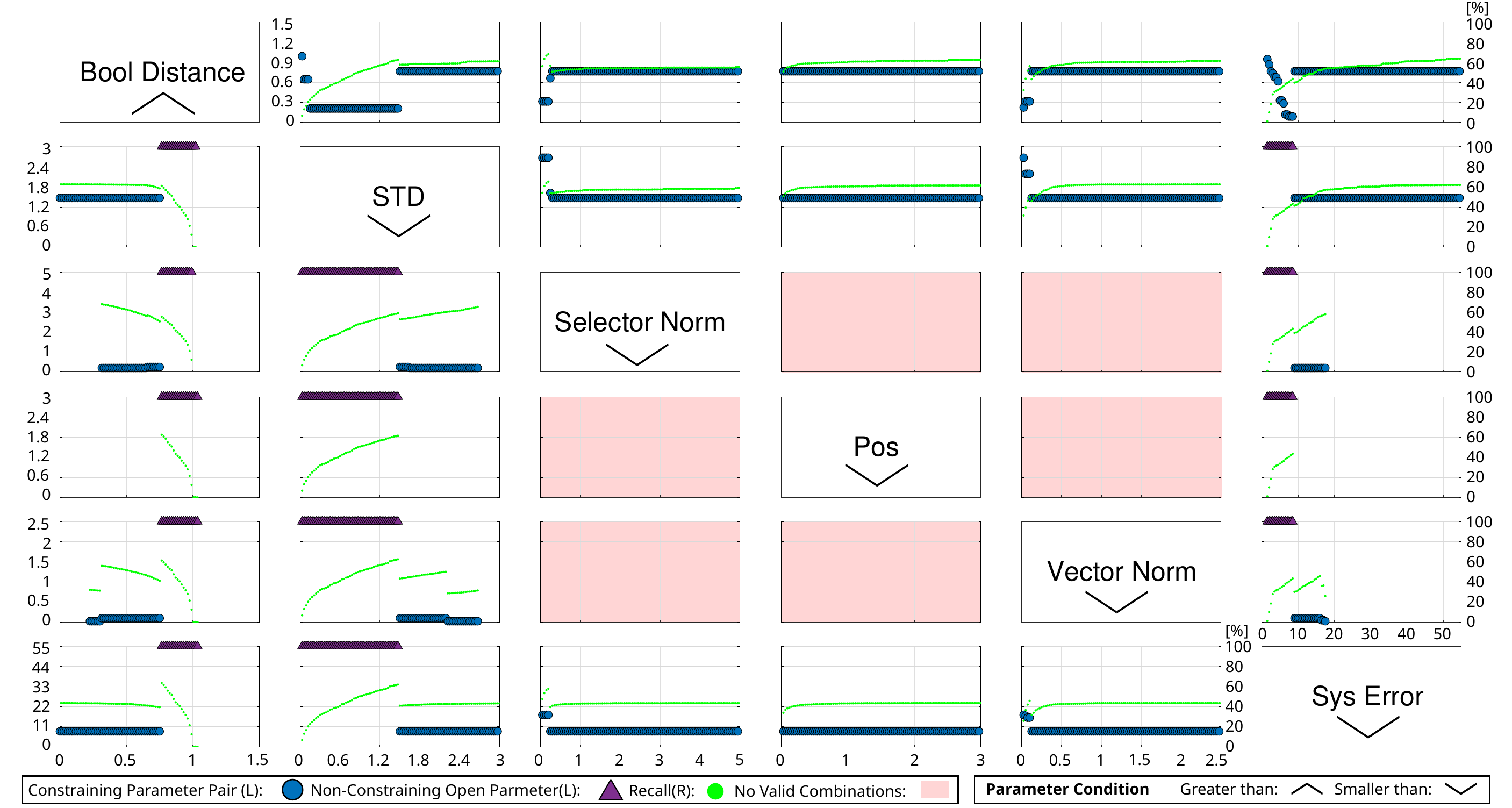}
    \caption{Matrix scatter plot with an overview of relevant final system parameters given 1750 samples from the previous sensitivity analysis.
    The graph is used to determine a precursor for rejecting outliers and improving false positives based on parameter correlation and recall.
    The diagonal labels determine the axis labels. 
    Parameters such as loss terms are rejected if they are smaller than ($\vee$) the threshold, and parameters such as Boolean distances are rejected if they are greater than ($\wedge$) the threshold.
    The left y-axis \textbf{(L)} relates to blue dots \protect\mpsympair, representing a constraining parameter pair, and purple triangles \protect\mpsymopen indicating that a parameter pair is not constraining. The right y-axis \textbf{(R)} relates to green dots \protect\mpsymrecall, representing the recall in $[\%]$ for the individual parameter pair.
    Red plots indicate no results for any parameter pair.}
    \label{fig:loss_scatter_matrix}
    \vspace{-5mm}
\end{figure*}

\begin{figure}[!tb]
    \centering
    \includegraphics[width=1\linewidth, trim={0 0mm 0 0mm},clip]{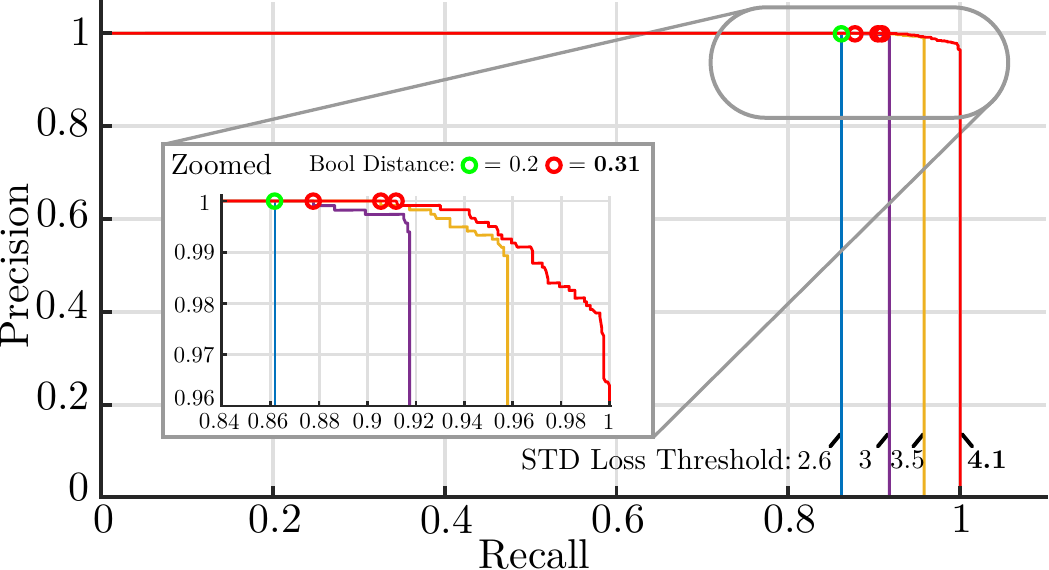}
    \caption{Precision/Recall, for dedicated $\mathcal{L}_\mathsf{std}$ over the soft Boolean distance threshold $0\text{-}1$ and markers at dedicated distances.
    $\mathcal{L}_\mathsf{std}<2.6$ rejects all wrong decisions independently of $\Delta \mathsf{b}$ at a lower recall $86\%$.
    The best result for $100\%$ precision is given at $\mathcal{L}_\mathsf{std}<4.1$ and a distance of $\Delta \mathsf{b} > 0.31$ with $91\%$ recall.}
    \label{fig:precision_recall_traditional}
    \vspace{-6mm}
\end{figure}

\begin{figure}[!tb]
    \centering
    \includegraphics[width=1\linewidth, trim={0 0mm 0 0mm},clip]{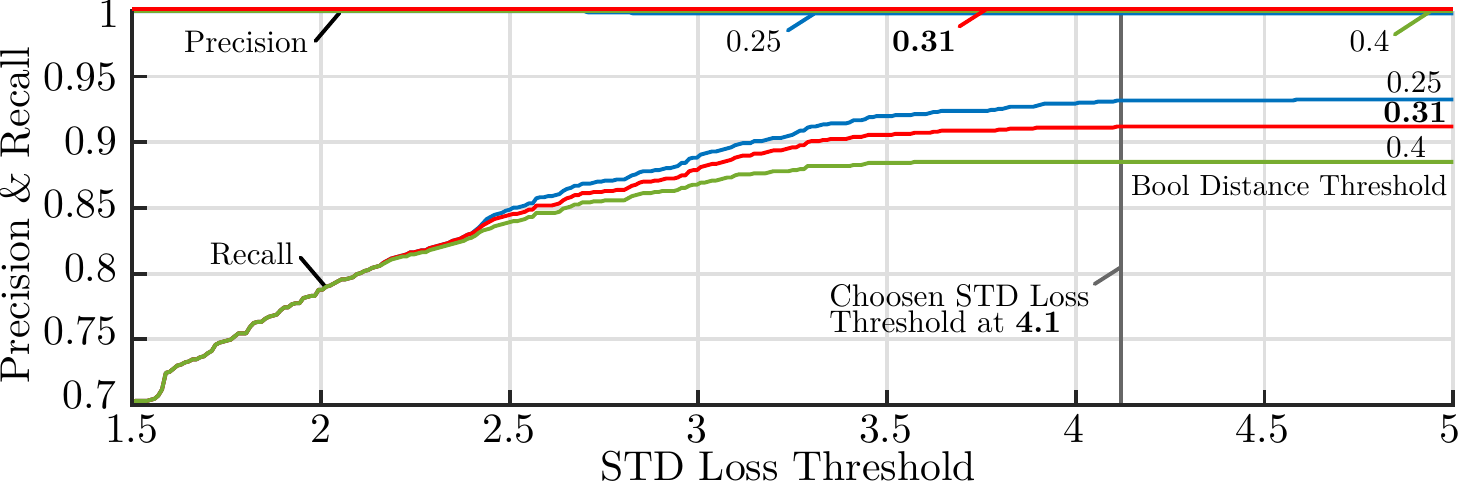}
    \caption{Precision/Recall, for dedicated soft Boolean distances ranging over $\mathcal{L}_\mathsf{std}$. The red precision graph for a distance of $\Delta \mathsf{b} = 0.31$ remains at $100\%$ precision below $\mathcal{L}_\mathsf{std}<4.1$ as the best threshold.}
    \label{fig:precision_recall}
    \vspace{-6mm}
\end{figure}

Since the problem definition for the presented method is not convex, a metric for detecting false decisions is vital.
The choice for a sensor model is made based on the soft Boolean with the highest magnitude. We define an additional health metric that uses the distance between the highest and second-highest soft Boolean as well as the final values of the loss functions, which define the soft Boolean properties.

The distance between the two highest soft Booleans correlates with the certainty of a correct selection. The box plot in Figure~\ref{fig:false_positives_box_plot} illustrates this relation. The left side shows the Boolean distances for correct and incorrect choices based on ground-truth selectors. The distances for the categorical ``wrong choice'' show a few outliers, but the first quantiles for both distances do not overlap.
Please keep in mind that the data used for this box plot is the same as for the results of the sensitivity analysis with perturbed SNR in Table~\ref{tab:sensitivity_snr_individual}. This data was designed to intentionally produce false choices due to, in part, close to unrealistic high noise perturbations on the system inputs.

Given the box plot, a naive choice for a decision threshold would be at $\Delta \mathsf{b} > 0.62$ labeled as the pre-loss term decision threshold in Figure~\ref{fig:false_positives_box_plot}. However, this includes two false positives and rejects an unnecessarily high number of true positives.
Thus, in addition to outlier rejection, we also want to define a metric that promotes a high recall (low false negatives) and $100\%$~precision (no false positives).
In the following, we introduce a metric to pre-condition the base of decision and filter decisions based on remaining loss terms, residual of the model, and the Boolean distance for each selection.
This health metric will be deployed by rejecting results for which any individual criteria does not hold and substantially improve the precision ($\frac{\textrm{true positives}}{\textrm{true positives} + \textrm{false positives}}$) and recall ($\frac{\textrm{true positives}}{\textrm{false negatives} + \textrm{true positives}}$) for the decision making process.

The parameter space for the identification of meaningful thresholds for the outlier detection in six dimensions is to high for numerical evaluations. Because of this, we are using a graphical representation in Figure~\ref{fig:loss_scatter_matrix} that uses a scatter plot matrix and the paradigm of precision and recall at its core. Here we can use multiple axes in two dimensions and identify opposing effects of e.g. remaining loss terms.
We can observe, which points of the value ranges do not retain \SI{100}{\percent} precision, and can observe this behavior over all remaining loss terms for ($\Delta \mathsf{b}$, $ \mathcal{L}_\mathsf{std}$, $\mathcal{L}_\mathsf{norm}$, $\mathcal{L}_\mathsf{pos}$, $\mathcal{L}_\mathsf{vec}$, and $f_\mathsf{sys}$).

The scatter plot matrix in Figure~\ref{fig:loss_scatter_matrix} shows the same raw data as the left side of the box plot (Fig.~\ref{fig:false_positives_box_plot}). Figure~\ref{fig:loss_scatter_matrix_explained} is intended to detail individual aspects based on the $\Delta \mathsf{b}$, $ \mathcal{L}_\mathsf{std}$ loss-terms.

The diagonal blocks of this matrix represent the pivot point of a specific parameter.
Scatter plots along a row of a respective diagonal entry present the specific parameter on their left y-axis, and plots along the column of this diagonal entry present the same parameter on the x-axis.
Each individual row and column have the same axis scaling.
Parameters such as the Boolean distance are included if they are greater than a threshold, indicated by $\wedge$, and parameters such as the remaining loss terms are included only if they are below a threshold, indicated by $\vee$.

Markers \mpsympair within the plots represent the combination of both parameters, which result in the highest recall at $100\%$~precision. The results are determined by fixing parameter values on the x-axis and increasing the parameter value corresponding to the y-axis to the last point where $100\%$~precision is given.
Because the parameter for the y-axis does not always pose a constraint in combination with the parameter of the x-axis, triangle markers \mpsymopen are used to indicate the case where no upper limit exists.
Fields marked in red indicate that no combination of the respective two parameters allow $100\%$~precision.
Outliers can be visually identified by cut-offs, causing longer straight lines, at higher value ranges either on the upper or lower triangular of the full plot matrix.

The purpose of this plot matrix is to identify parameter combinations of interest for further evaluation.
Because the marker for parameter pairs do not provide any information about the corresponding recall value, data points~\mpsymrecall for individual recall values ranging from~$0\text{-}\SI{100}{\percent}$ according to the right y-axis are added.

We are using loss terms, residuals, and the Boolean distance for this metric. Loss terms close to zero represent the most accurate results, and high loss terms indicate that particular properties and constraints could not be retained.
The opposite holds for the Boolean distance. The bigger the distance, the more certain the decision is.
Generally, we require thresholds that are as constraining as necessary but as inclusive as possible to retain a high recall.

The matrix plot shows that there is no meaningful combination between the loss-terms $\mathcal{L}_\mathsf{norm}$, $\mathcal{L}_\mathsf{pos}$, and $\mathcal{L}_\mathsf{vec}$ because there is no parameter combination to produce a valid result at $100\%$ precision.
Outliers can be visually inferred for \mbox{$\Delta \mathsf{b} = 0.65$}, \mbox{$\mathcal{L}_\mathsf{std}=1.49$}, \mbox{$\mathcal{L}_\mathsf{norm}=0.2$}, and \mbox{$f_\mathsf{sys} = 8.57$}.
The matrix plot indicates that the most meaningful parameters for outlier rejection and maximization of the recall at $100\%$ precision are $\mathcal{L}_\mathsf{std}$ and $\mathcal{L}_\mathsf{norm}$ because it shows the highest recall at $65\%$.

We can also see that $\mathcal{L}_\mathsf{std}$ together with $\Delta \mathsf{b}$ indicates a more dynamic parameter range that can allow for further improvement.
Thus, we are applying the first criteria to the individual results ($\mathcal{L}_\mathsf{norm}<0.2$) and generate precision/recall graphs for $\Delta \mathsf{b}$, depending on $\mathcal{L}_\mathsf{std}$ to further detail their effect on the recall after the precursor was applied.

Figure~\ref{fig:precision_recall_traditional} shows the precision/recall graph for dedicated $\mathcal{L}_\mathsf{std}$ thresholds while varying the Boolean distance threshold.
It can be seen that a strong limitation of $\mathcal{L}_\mathsf{std} < 2.6$ eliminates all incorrect choices at $100\%$ precision, but at the cost of a low recall due to unnecessarily removed true samples.
The best recall ($91\%$) while maintaining $100\%$~precision is reached for \mbox{$\mathcal{L}_\mathsf{std} > 4.1$} and \mbox{$\Delta \mathsf{b} = 0.31$}.

Figure~\ref{fig:precision_recall} shows the same data for individual Boolean distances but varying $\mathcal{L}_\mathsf{std}$. A recall plateau is reached for higher STD loss thresholds.
Please note that $\mathcal{L}_\mathsf{std}$ does not decrease the Boolean threshold further. However, it reduces the number of samples in the false negative and true negative categories, thus allowing for an increased recall and a redundant multi-parameter method, providing a robust health metric.
The right box plot of Figure~\ref{fig:false_positives_box_plot} shows the statistic for the Boolean distances with the full pre-filtering of samples applied $\mathcal{L}_\mathsf{norm} > 0.2, \mathcal{L}_\mathsf{std} > 4.1, \Delta \mathsf{b} = 0.31$.
The green area marks the improvement of the decision range for correct choices, providing zero false positives and corresponding recalls according to Figures~\ref{fig:precision_recall_traditional} and \ref{fig:precision_recall}.

The presented health metric allows for the detection of all false choices with zero false positives.
Should a selection be identified as incorrect, the procedure for stage~1 can be re-run with modified initial values to reach a correct choice within the optimization landscape.
Therefore, the presented method can be used with raw data in contrast to \cite{Brunton2016} and \cite{Zheng2019}, which requires a pre-processing of data and possible loss of information to improve noise resilience.

\subsection{Reference Frame Determination and Calibration States}
\label{sec:ref_frame_determination}
We will demonstrate the benefit of the reference frame determination based on the position sensor module.
Calibration states are defined by the IMU to sensor translation \mbox{$\mathbf{p}_\mathsf{is} = \left[0.3,0.5,1.0\right]$}
and, in case one is given, the reference frame \mbox{$\mathbf{p}_\mathsf{rw} = \left[ 10,0,0 \right]$} and 
\mbox{$\boldsymbol{\omega}_\mathsf{rw}=\left[0,0.8727,0 \right]$} corresponding to \mbox{$\left[0,50,0\right]$} degree in Euler angles.

The evaluation is done based on four scenarios, with 10 different trajectories and numerically illustrated by Table~\ref{tab:ref_frame_test}.
Two scenarios in which no reference frame is required, once without (w/o) additional constraint and once with $\mathcal{L}_\mathsf{ref}$ applied \mbox{(w/$\mathcal{L}_\mathsf{ref}$)}. The two other scenarios, in contrast do require a reference frame and are also evaluated once without reference frame constraint and once with $\mathcal{L}_\mathsf{ref}$.

Table~\ref{tab:ref_frame_test} shows the improvement in the cases where the additional loss term $\mathcal{L}_\mathsf{ref}$ was applied. Refining the reference frame also positively affects the accuracy of the calibration $\mathbf{p}_\mathsf{is}$.
In case a reference frame exists, the accuracy of this frame is improved. In case no reference frame is required, the basis for this decision becomes more clear due to a marginal remaining norm of $3\!\cdot\!10^{-5}$.

\begin{table}[!tb]
\setlength\tabcolsep{0.75pt}
\footnotesize 
    \caption{Comparison of final calibration and reference states for the position sensor module. The table shows absolute values (e.g. $\mathbf{p}_\mathsf{ip}$) and individual errors w.r.t. the true calibration (e.g. $\mathbf{\tilde{p}}_\mathsf{ip}$), and the corresponding norm. The evaluation presents the average result based on 10 different trajectories.
    The comparison shows four cases for cross-evaluation where a reference frame was required or not, and the loss term $\mathcal{L}_\mathsf{ref}$ was applied or not.
    In the case of the reference frames, applying the additional loss provides a more reliable basis for the decision if a reference frame needs to be modeled. $\boldsymbol{\omega}$ represents the rotation in the tangent space and radians and $\lVert\boldsymbol{\omega}\rVert$ is the angle error. 
    \looseness=-1}%
    \label{tab:ref_frame_test}
    \begin{center}
    \vspace{-2mm}
\begin{tabular}{|c|c|c|c|c|c|c|c|c|c|c|c|c|}
\hline 
 & \multicolumn{2}{c|}{w/o} & \multicolumn{2}{c|}{w/$\mathcal{L}_\mathsf{ref}$} & \multicolumn{2}{c|}{w/o} & \multicolumn{2}{c|}{w/$\mathcal{L}_\mathsf{ref}$} & \multicolumn{2}{c|}{w/o} & \multicolumn{2}{c|}{w/$\mathcal{L}_\mathsf{ref}$}\tabularnewline
\hline 
 & $\mathbf{p}_\mathsf{ip}$ & $\mathbf{\tilde{p}}_\mathsf{ip}$ & $\mathbf{p}_\mathsf{ip}$ & $\mathbf{\tilde{p}}_\mathsf{ip}$ & $\mathbf{p}_\mathsf{rw}$ & $\mathbf{\tilde{p}}_\mathsf{rw}$ & $\mathbf{p}_\mathsf{rw}$ & $\mathbf{\tilde{p}}_\mathsf{rw}$ & $\boldsymbol{\omega}_\mathsf{rw}$ & $\boldsymbol{\tilde{\omega}}_\mathsf{rw}$ & $\boldsymbol{\omega}_\mathsf{rw}$ & $\boldsymbol{\tilde{\omega}}_\mathsf{rw}$\tabularnewline
\hline
\multicolumn{13}{|c|}{No reference frame required} \\
\hline
x                 & 0.298 & 0.027 & 0.300 & 0.013 & 0.017 & 0.057 & 9e-6 & 3e-5 & 0.013 & 0.025 & 9e-4 & 0.002 \tabularnewline
\hline 
y                 & 0.497 & 0.022 & 0.494 & 0.014 & 0.033 & 0.090 & 8e-7 & 2e-6 & 0.007 & 0.028 & 7e-5 & 0.004 \tabularnewline
\hline 
z                 & 1.009 & 0.034 & 1.000 & 0.022 & 0.028 & 0.071 & 9e-8 & 7e-8 & 0.003 & 0.025 & 0.002 & 0.005 \tabularnewline
\hline 
$\lVert . \rVert$ & 1.164 & 0.048 & 1.155 & \textbf{0.029} & 0.047 & 0.128 & 1e-7 & \textbf{3e-5} & 0.015 & 0.045 & 0.002 & \textbf{0.008} \tabularnewline
\hline 
\hline 
\multicolumn{13}{|c|}{With reference frame required} \\
\hline
x                 & 0.294 & 0.026 & 0.315 & 0.025 & 9.998 & 0.066 & 9.915 & 0.109 & 0.014 & 0.038 & 0.003 & 0.008
\tabularnewline
\hline 
y                 & 0.503 & 0.026 & 0.502 & 0.023 & 0.002 & 0.098 & 0.002 & 0.005 & 0.880 & 0.038 & 0.879 & 0.023
\tabularnewline
\hline 
z                 & 0.981 & 0.064 & 1.032 & 0.062 & 0.032 & 0.121 & 0.003 & 0.012 & 0.003 & 0.037 & 0.002 & 0.013
\tabularnewline
\hline 
$\lVert . \rVert$ & 1.141 & 0.074 & 1.190 & \textbf{0.070} & 9.998 & 0.169 & 9.915 & \textbf{0.110} & 0.880 & 0.065 & 0.879 & \textbf{0.029} \tabularnewline
\hline 
\end{tabular}
    \end{center}
\vspace{-5mm}
\end{table}
The error of the final calibration states is evaluated based on the same 10 trajectories as for the reference frame determination.
Table~\ref{tab:calib_state_test} shows the averaged errors for the individual calibrations.
The absolute calibration errors show sufficient accuracy for the given noise parameters and thus, the initial guess of the calibration states are beneficial for the initialization of e.g., an EKF.
\begin{table}[!tb]
\setlength\tabcolsep{5pt}
\footnotesize 
    \caption{Average extrinsic calibration errors for all sensor modules based on 10 trajectories. $\boldsymbol{\omega}$ represents the rotation in the tangent space in radians and $\lVert\boldsymbol{\omega}\rVert$ is the angle error.}%
    \label{tab:calib_state_test}
    \begin{center}
    \vspace{-2mm}
\begin{tabular}{|c|c|c|c|c|c|c|c|c|}
\hline 
 & \multicolumn{1}{c|}{Pos.} & \multicolumn{2}{c|}{Inv. Pos.} & \multicolumn{1}{c|}{Vel.} & \multicolumn{2}{c|}{Body Vel.} & \multicolumn{2}{c|}{Mag.} \tabularnewline
\hline 
 & $\mathbf{\tilde{p}}_\mathsf{ip}$ &
 $\mathbf{\tilde{p}}_\mathsf{ip}$ & $\boldsymbol{\tilde{\omega}}_\mathsf{ip}$ & $\mathbf{\tilde{p}}_\mathsf{iv}$ & 
 $\mathbf{\tilde{p}}_\mathsf{iv}$ & $\boldsymbol{\tilde{\omega}}_\mathsf{iv}$
 &$\mathbf{\tilde{m}}_\mathsf{w}$ & $\boldsymbol{\tilde{\omega}}_\mathsf{im}$ \tabularnewline
\hline
x                 & 0.026 & 0.076 & 0.016 & 0.002 & 0.008 & 0.004 & 0.014 & 0.023 \tabularnewline
\hline 
y                 & 0.021 & 0.062 & 0.024 & 0.004 & 0.006 & 0.006 & 0.011 & 0.008 \tabularnewline
\hline 
z                 & 0.033 & 0.141 & 0.012 & 0.003 & 0.012 & 0.009 & 0.009 & 0.031 \tabularnewline
\hline 
$\lVert . \rVert$ & 0.048 & 0.172 & 0.031 & 0.005 & 0.013 & 0.012 & 0.019 & 0.039 \tabularnewline
\hline 

\end{tabular}
\vspace{-8mm}
    \end{center}
\end{table}

\subsection{Real-World Application (UAV)}
\label{sec:experiment_real_world_uav}

The previous sections used simulated noise-affected data with dynamic trajectories to demonstrate and evaluate the presented method on sensor model identification.
The following experiment is intended to show the application of this approach to real-world data. For this purpose, we use a larger-scale outdoor-to-indoor transition trajectory from the INSANE datasets \cite{Brommer2024_insane}.
The dataset used a quadcopter that was designed for non-agile flight maneuvers and therefore, the flight segment of the trajectory has only little dynamic movement and no noteworthy orientation excitation.
However, the trajectory does have an initial segment where the quadcopter was excited by hand in 6-DoF, and then placed on the ground for a consecutive flight.
\looseness=-1

\begin{figure}[bp]
    \centering
    \vspace{-5mm}
    \includegraphics[width=0.95\linewidth, trim={0 0mm 0 0mm},clip]{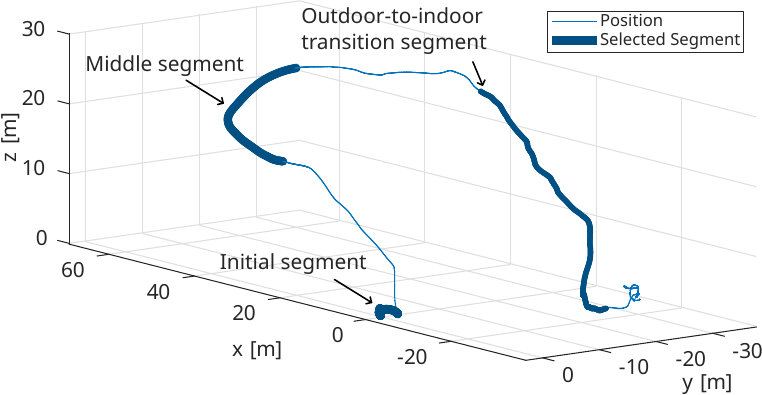}
    \caption{3D position of the full trajectory, estimated with the MaRS framework. The bold highlighted sections are the initial and middle segments used for the sensor model identification experiment.}
    \label{fig:smi_realworld_3d_pos}
\end{figure}

The objective is to detect a global velocity sensor, representing a vector element, and a magnetometer as a rotational component.
The magnetometer (@\SI{80}{\hertz}) is mounted close to the center of the vehicle. The ground truth calibration, given by the dataset, was performed with dedicated rotational excitation. The velocity measurements are provided by a GNSS sensor (@\SI{8}{\hertz}), which is mounted off-center on a larger beam at a distance of \SI{0.56}{\meter} from the IMU.
The localization information of the vehicle is generated, using the MaRS framework with two GNSS sensors as an input. The sensor data for the gray-box identification remained unprocessed and is directly passed to the identification approach.

\begin{figure}[tb]
    \centering
    \includegraphics[width=1\linewidth, trim={0 0mm 0 0mm},clip]{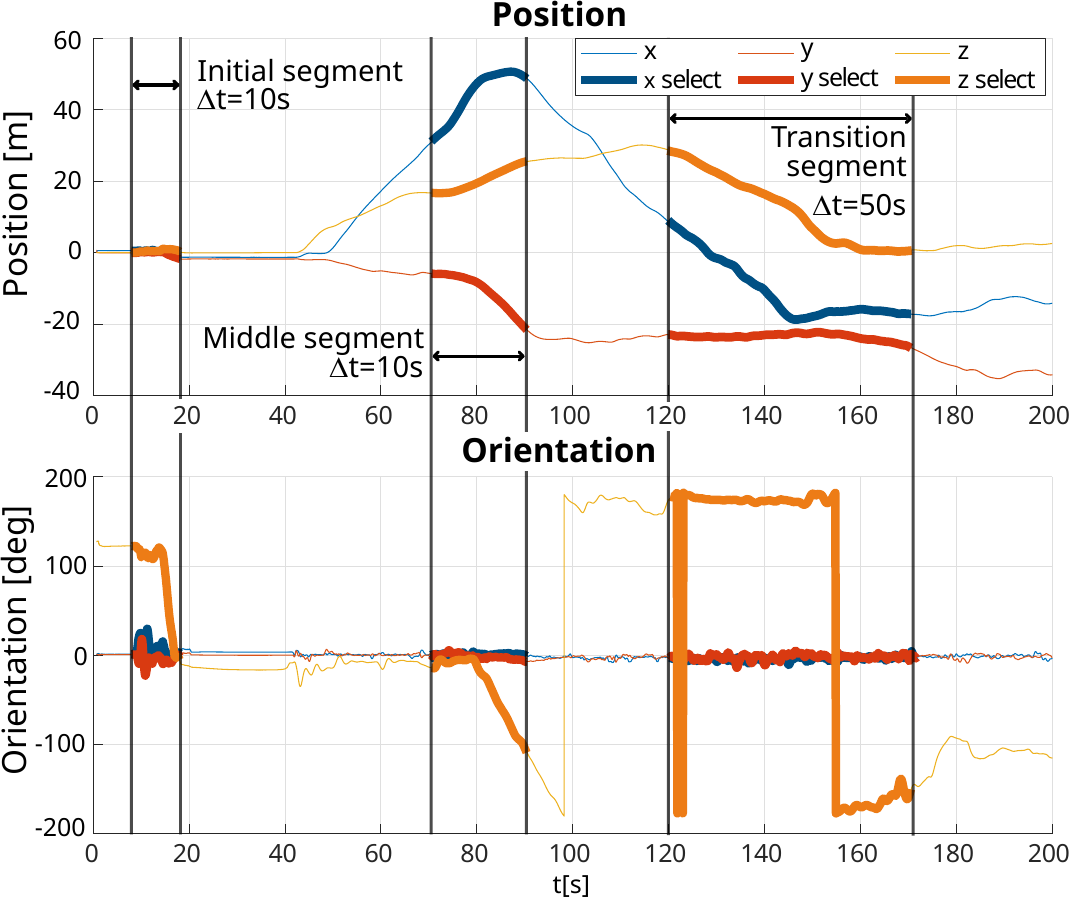}
    \caption{2D position and rotation estimates of the trajectory shown by Figure~\ref{fig:smi_realworld_3d_pos}.}
    \label{fig:smi_realworld_pose}
    \vspace{-4mm}
\end{figure}

We chose to perform the experiment for the magnetometer and velocity sensor identification for three segments. One for the initial phase, where the vehicle was excited by hand, representing a realistic scenario where an unknown sensor model needs to be identified and initialized on the ground, a second segment, where a sensor model needs to be identified mid-flight, and a more difficult third outdoor-to-indoor transition segment for the velocity model, where the velocity signal degrades towards the end of the trajectory.
The graphs in Figure~\ref{fig:smi_realworld_3d_pos} and \ref{fig:smi_realworld_pose} show the estimated position and orientation for the full trajectory where the selected segments are highlighted.
The initial segment is \SI{10}{\second} long (\SI{8}{\second}-\SI{18}{\second}), with 870 samples for the magnetometer and 80 samples for the velocity sensor. The middle segment is \SI{20}{\second} long (\SI{70}{\second}-\SI{90}{\second}), with 1756 magnetometer and 160 velocity samples, and the transition segment is \SI{50}{\second} long (\SI{120}{\second}-\SI{170}{\second}) with 400 velocity samples.
The middle segment is chosen to be longer and for a period where a rotation in yaw occurred because we need to increase the information given by the unknown signal, given the moderate flight pattern. As mentioned before, for real-world scenarios, the addition of sensors mid-flight could benefit from intentionally performing short maneuvers that increase the quality of observability for improved detection~\cite{Preiss2018}.

\begin{figure}[t]
    \centering
    \includegraphics[width=1\linewidth, trim={0 0mm 0 0mm},clip]{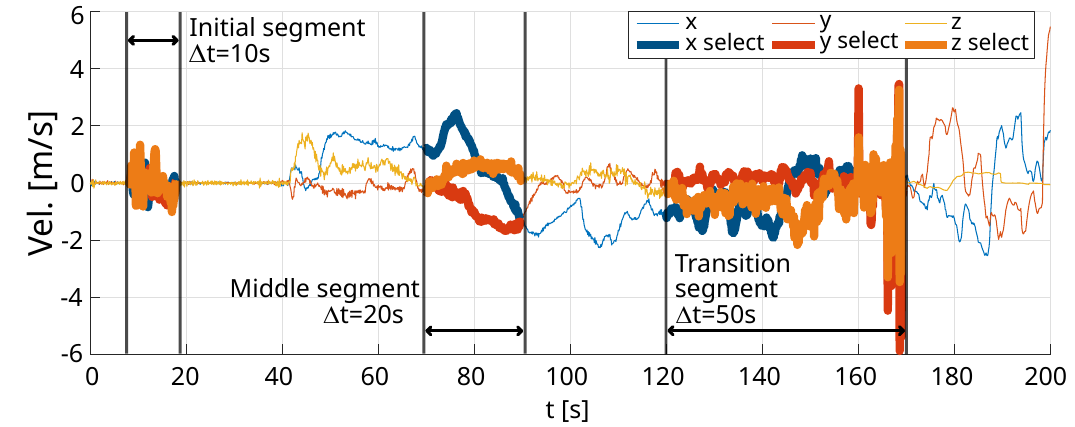}
    \vspace{-5mm}
    \caption{Unprocessed GNSS velocity measurements with the highlighted section, which are used for the sensor model identification experiment.}
    \label{fig:smi_realworld_vel}
    \vspace{-5mm}
\end{figure}

\begin{figure}[t]
    \centering
  \includegraphics[width=1\linewidth, trim={0 0mm 0 0mm},clip]{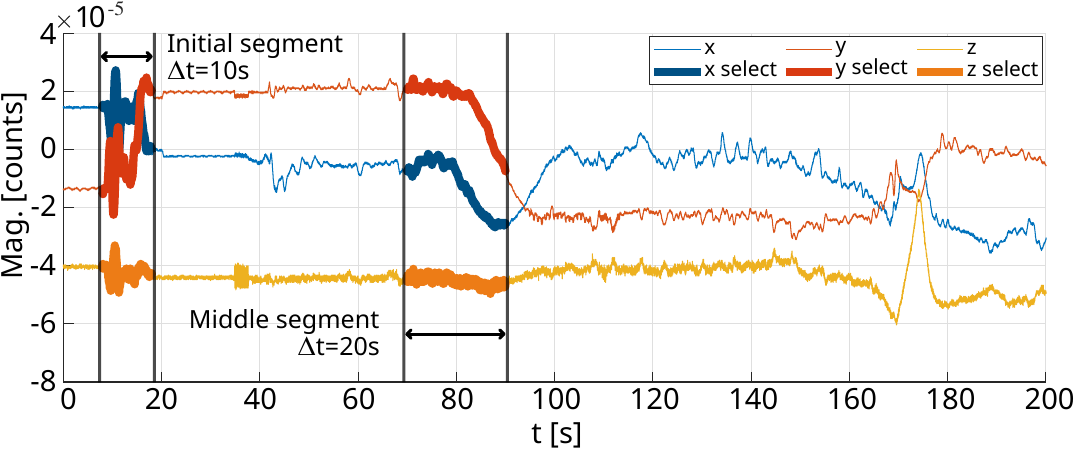}
    \caption{Unprocessed magnetometer sensor measurements in counts. The highlighted section is used for the sensor model identification experiment.}
    \label{fig:smi_realworld_mag}
    \vspace{-2mm}
\end{figure}

Figure~\ref{fig:smi_realworld_vel} and~\ref{fig:smi_realworld_mag} show the raw magnetometer and velocity measurements with the highlighted test segments.
The estimated core state and the unknown sensor signal samples are synchronized to the sensor timestamps and used as input to the sensor model identification approach.

The sensor model identification was successful in all four cases. The evaluation of the experiments is done based on a number of parameters: The final soft Booleans for the selection, the error of the calibration states, and the RMSE for the sensor measurement compared to the resembled measurement based on the parameterized sensor model.

\begin{table}[tb]
    \centering
    \begin{tabular}{ r c c c c c}
         & \multicolumn{2}{c}{\textbf{Mag.}} & \multicolumn{3}{c}{\textbf{Vel.}}\\
        \cmidrule(lr){2-3} \cmidrule(lr){4-6}
        \textbf{Soft Boolean}& Init. & Mid.  & Init. & Mid. & Transit.\\
        \midrule
        Selector        & 0.971 & 1.007 & 0.852 & 0.998 & 0.812 \\
        Dist. to second & 0.945 & 0.982 & 0.772 & 0.998 & 0.626 \\
        \bottomrule
    \end{tabular}
    \caption{Value of the soft Boolean selectors and the distance to the second highest boolean for both sensors and trajectory segments. The boolean distance is a measure for the accuracy of the selection as described in Section~\ref{sec:healt_metric}.}
    \label{tab:smi_boolean_realworld}
    \vspace{-0.4cm}
\end{table}

\begin{table}[!b]
    \centering
    \vspace{-3mm}
    \setlength\tabcolsep{3pt}
    \begin{tabular}{cccccc}
       & & \multicolumn{2}{c}{Init.} & \multicolumn{2}{c}{Mid.}\\
       \cmidrule(lr){3-4} \cmidrule(lr){5-6}
       $\rvar{i}{m}$ \textbf{[deg]} & \textbf{True} & \textbf{Est.} & \textbf{Error} & \textbf{Est.} & \textbf{Error}\\
       \midrule
       x & ~0.260 & 0 &  ~0.260 & 0 & ~0.260  \\
       y & ~2.264 & 0 & ~2.264 & 0 & ~2.264\\
       z & -1.882 & 0 & -1.883 &  0 & -1.882\\
       \midrule
       $\vvar{m}[w]$ \textbf{[deg]} & & & & & \\
       Incl. & ~85.178 & ~81.747 & ~-3.431 & ~92.290 & ~~7.112\\
       Decl. & -46.330 & -62.471 & -16.141 & -62.016 & -15.686\\
       \bottomrule
    \end{tabular}
    \caption{Comparison of the true magnetic field vector expressed in the world frame $\vvar{m}[w]$ and the rigid-body rotation $\rvar{i}{m}$ to the estimated results, using the parameterized sensor model, identified by the sensor model identification approach.
    }
    \label{tab:smi_realworld_mag}
\end{table}

The final boolean selector and the distance to the second highest boolean, shown by Table~\ref{tab:smi_boolean_realworld}, are unambiguous and true-positives according to the health metric (see Sec.~\ref{sec:healt_metric}).
Tables~\ref{tab:smi_realworld_mag} and~\ref{tab:smi_realworld_vel} show the estimated calibration state of the two sensors after stage two of the identification process (see Fig.~\ref{fig:method_outline}).
The calibration for the magnetometer shows minor errors for the estimation of the rotation for the rigid-body calibration $\rvar{i}{m}$, which has an angle error of $\SI{2.953}{\degree}$ for both, the initial and middle part of the trajectory. However, the declination of the estimated magnetic field expressed in the world frame $\vvar{m}[w]$ has a noticeable error of $\SI{15}{\degree}$, which is due to only minimal excitation of the roll and pitch angle for both data segments.

\begin{table}[t]
    \centering
    \setlength\tabcolsep{3pt}
    \begin{tabular}{cccccccc}
       & & \multicolumn{2}{c}{Init.} & \multicolumn{2}{c}{Mid.} & \multicolumn{2}{c}{Transit.}\\
       \cmidrule(lr){3-4} \cmidrule(lr){5-6} \cmidrule(lr){7-8}
       $\vvar{p}[i][v]$ \textbf{[m]} & \textbf{True} & \textbf{Est.} & \textbf{Error} & \textbf{Est.} & \textbf{Error} & \textbf{Est.} & \textbf{Error}\\
       \midrule
       x & 0.35 & 0.356 & -0.006 & 0.449 & -0.099 & ~0.866 & 0.516 \\
       y & 0.41 & 0.420 & -0.010 & 0.697 & -0.287 & -1.775 & 2.185 \\
       z & 0    & 0.212 & -0.212 & 0.288 & -0.288 & ~1.065 & 1.065 \\
       \bottomrule
    \end{tabular}
    \caption{
    Estimated and measured calibration of the rigid body translation between the IMU and the velocity sensor $\vvar{p}[i][v]$. The error of the z-component is slightly higher due to the zero z-component of the true calibration and an insufficient orientation excitation around the directional vector from the IMU to the sensor frame.}
    \label{tab:smi_realworld_vel}
\end{table}

\begin{table}[tbp]
    \centering
    \begin{tabular}{ c c c c c c}
        \cmidrule(lr){2-3} \cmidrule(lr){4-6}
        \textbf{RMSE} & Init. & Mid.  & Init. & Mid. & Transit.\\
        \midrule
        x & 1.365 & 2.249 & 0.183 & 0.098 & 0.295 \\
        y & 1.334 & 2.004 & 0.162 & 0.089 & 0.595 \\
        z & 0.979 & 1.286 & 0.183 & 0.092 & 0.529 \\
        \bottomrule
    \end{tabular}
    \caption{The table shows the RMSE for the comparison of the raw magnetometer and velocity measurements and the calculated measurement estimates.
Measurement estimates are generated based on the identified sensor model, the parameterization, and given core states. The RMSE is shown for the initial and middle segments of the trajectory.}
    \label{tab:smi_realworld_rmse}
    \vspace{-3mm}
\end{table}

The rigid body translation between the IMU sensor and the velocity sensor $\vvar{p}[i][v]$ also shows plausible error margins.
In general, the error for the z-axis is higher ($\sim \SI{20}{\centi\meter}$) for both data segments. This is due to the fact that the true calibration of the sensor is negligible for its z-component.
Because of this, a high angular velocity around a directional vector from the IMU to the sensor would be required to correctly estimate this parameter \cite{Huang2024}. Yet, the high leaver-arm of \SI{0.56}{\meter} of the x and y-component allows for well-observable x and \mbox{y-calibration}, shown by the low error for the initial phase. On the contrary, the y-component for the mid-flight data segment shows a higher error due to the lack of roll and pitch excitation during flight.
Despite the small errors for the calibration states, the overall RMSE, shown by Table~\ref{tab:smi_realworld_rmse} for the individual data segments, is reasonable. 

\begin{figure}[t]
    \centering
    \vspace{0mm}
    \includegraphics[width=1\linewidth, trim={0 0mm 0 0mm},clip]{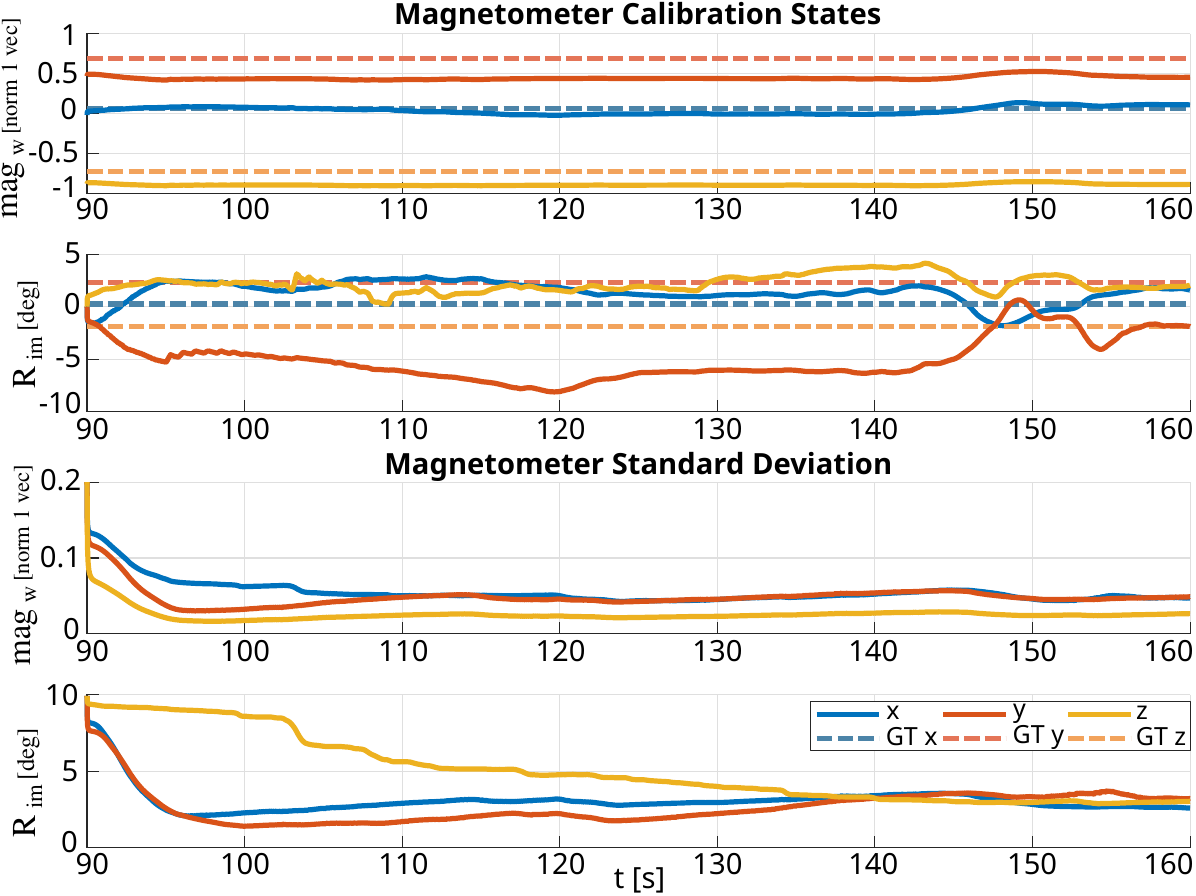}
    \caption{The graphs show the convergence of the magnetometer calibration states: the reference of the magnetic field expressed in the world frame $\vvar{mag}[w]$, and the rigid body magnetometer sensor orientation with respect to the IMU sensor $\rvar{i}{m}$.
    The magnetometer is added to the MaRS framework during runtime at $t=\SI{90}{\second}$ with the initial guess of the sensor model identification approach. This plot shows that the calibration states converge toward the correct values despite an initial estimate with a slightly higher error, as shown by Table~\ref{tab:smi_realworld_mag}. This scenario is particularly difficult because the calibration states require more agile rotational movement to converge correctly, and the given trajectory only has moderate movement.
    }
    \label{fig:smi_realworld_mars_mag}
    \vspace{-5mm}
\end{figure}

As mentioned in the introduction, the sensor model identification approach is meant to automate the process of integrating a sensor into a state estimation framework, possibly to the extent where this is done online. Thus, in the following, we are using the MaRS framework, which utilizes two GNSS sensors for the estimate of the core states during the whole sequence of the trajectory, and introduce the previously identified sensor models, in two separate experiments, with the initial guess from the calibration stage at $t=\SI{90}{\second}$.

\begin{figure}[tb]
    \centering
    \includegraphics[width=1\linewidth, trim={0 0mm 0 0mm},clip]{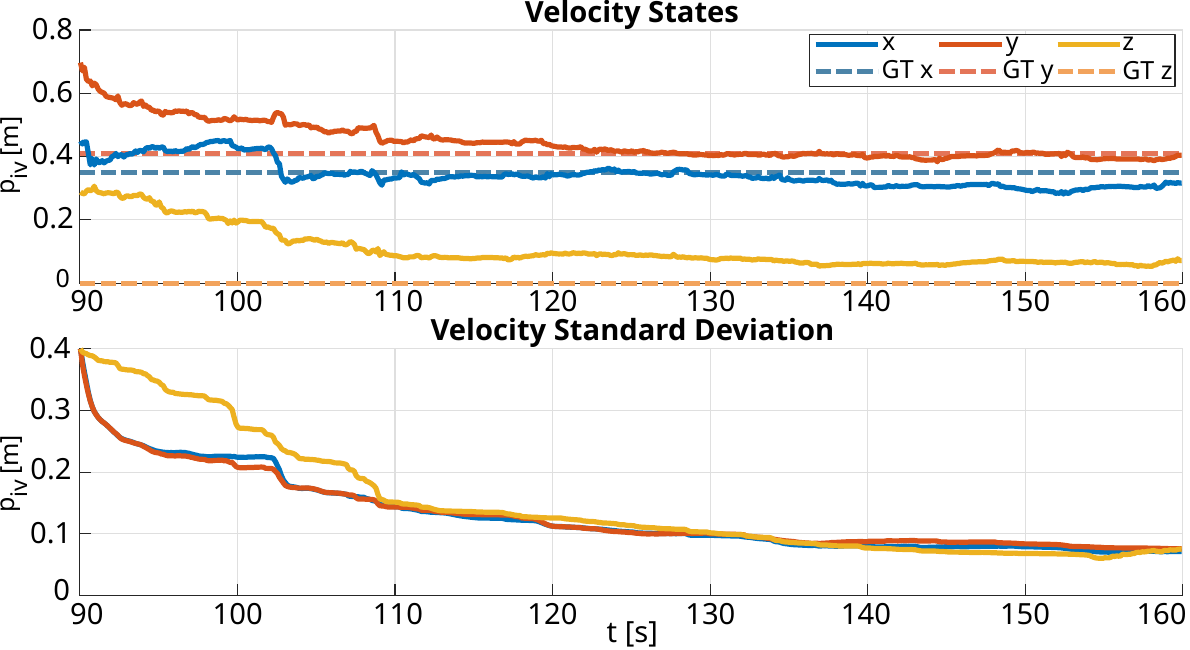}
    \caption{Convergence of the rigid body velocity calibration states with the position of the sensor with respect to the IMU frame $\vvar{p}[i][v]$.
    The sensor is added to the MaRS framework during runtime at $t=\SI{90}{\second}$ with the initial guess of the sensor model identification approach. This plot illustrates that the calibration states converge toward the correct values despite a higher error for the initial guess, as shown by Table~\ref{tab:smi_realworld_vel}.
    }
    \label{fig:smi_realworld_mars_vel}
    \vspace{-5mm}
\end{figure}

The result for the integration of the magnetometer sensor is shown by Figure~\ref{fig:smi_realworld_mars_mag}. This graph illustrates the convergence of the magnetic field vector $\vvar{m}[w]$ and rigid-body rotation $\rvar{i}{m}$. It can be seen that both calibration states converge towards the true value.
This is an improvement compared to the initial guess but not a convergence as might be expected. There are two reasons for this slight discrepancy.
One is the lack of orientational excitation, which limits the quality of convergence. The second is more predominant and due to the intrinsic calibration of the magnetometer sensor.
It needs to be noted that the normalized measurements of a magnetic field vector lie on a unit sphere; however, magnetometer suffer from hard- and soft-iron errors which cause an offset and distortion of this sphere and require dedicated compensation \cite{Vasconcelos2011}. 
This compensation is done for the generation of the ground truth calibration of the sensor, but not for the measurement input to the sensor model identification approach. This is done to maintain a fair comparison because the input measurement is meant to be unknown and unprocessed.

The result of integrating the velocity sensor is shown by Figure~\ref{fig:smi_realworld_mars_vel}. The distinct changes in velocity (see Fig.~\ref{fig:smi_realworld_vel}) result in a clean convergence of the rigid body calibration $\vvar{p}[i][v]$, which compensates for the small error of the initial guess given by the sensor model identification approach.

Overall, the given sensor types have been reliably detected in both real-world cases.
The boolean selectors show a high value, which indicates a low ambiguity in the selection process despite the fact that the data was prone to observability issues for both sensor types.
The estimate for the initial guess of the sensor calibrations showed small errors after completing the sensor parameterization stage. However, the consecutive integration of the parameterized sensor models to a state estimator during runtime showed that the self-calibration of the sensor states further improve the sensor's rigid body calibrations.

This method can be improved by using observability-aware motion generation \cite{Hausman2017}, either during the time of the sensor model identification or the later refinement of the calibration states by the recursive filter. However, this is out of the scope of the presented experiment.

The previous experiments showed an example with undisturbed real-world sensor data. The following experiment will show that the identification approach can also identify a model if real-world data with an increased noise profile is given. For this case, we are using the same dataset and choose the outdoor-to-indoor transition sequence between $120\text{-}\SI{170}{\second}$ to identify a velocity sensor.

During this time, the vehicle is moving along the front of a building and enters the building at the end.
Thus, the raw velocity measurement (see Fig.~\ref{fig:smi_realworld_vel}) shows a higher noise due to shadow-casting and multi-path effects (urban canyoning) and distinct outliers at the end of this sequence because the GNSS signal is dropping out. Therefore, one part of the given measurement sequence is ill-conditioned and the second part is becoming invalid.

Please note that the estimate of the vehicle state is based on a multi-sensor and sensor switching setup including GNSS, VIO, Magnetometer, and an indoor motion capturing system. The interested reader is referred to \cite{Brommer2024_insane} for the detailed setup of the filter.
Thus, the estimate of the vehicle localization can be assumed to be correct for the entire period and only the GNSS data stream becomes invalid at the end.

\begin{figure}[!t]
    \centering
    \includegraphics[width=1.0\linewidth, trim={0 0mm 0 0mm},clip]{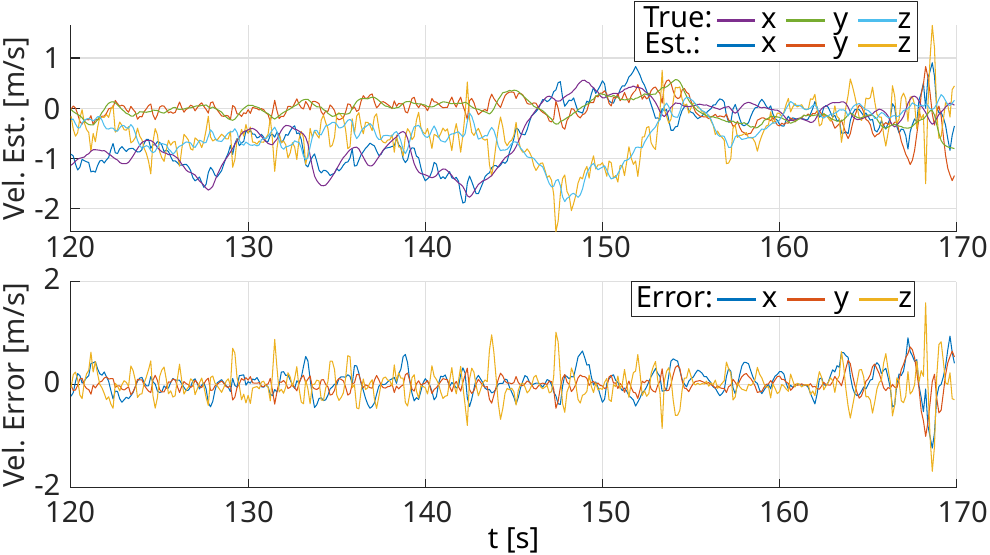}
    \caption{
    Reprojection of the correctly identified velocity sensor model and its ground truth as well as the corresponding error plot. Despite the fact that the given real-world measurement sequence has increased noise and partially invalid measurements at the end, the resembled data generally matches the true signal.
    However, the plots show jitters due to the increased error for the estimated calibration states, which is a result of the erroneous real-world measurement sequence.}
    \label{fig:smi_real_world_vel_transition}
    \vspace{-5mm}
\end{figure}

Table~\ref{tab:smi_boolean_realworld} shows that the velocity sensor model is correctly identified, confirmed by the boolean selector. Compared to the results for the initial and middle segment, Table~\ref{tab:smi_realworld_vel} shows an increased error for the estimation of the calibration states for the transition scenario.
This is due to the higher noise floor and lack of orientation, leading to poorly observable calibration states. Table~\ref{tab:smi_realworld_rmse} also shows an increased RMSE due to the significantly higher noise of the measurement signal that we are comparing against.
However, the correct sensor model identification shows that the proposed approach also works with realistically ill-conditioned real-world data. Figure~\ref{fig:smi_real_world_vel_transition} shows the corresponding overlay of the ground truth for the measurement signal, and the estimate, based on the identified sensor model and parameters for the calibration states as well as the error over time.
\subsection{Real-World Application (Ground Vehicle)}
\label{sec:experiment_real_world_ground_vehicle}

While the previous section demonstrates the approach to sensor model identification for a \ac{UAV} use case, this section presents an experiment for a ground vehicle. We are using the same filtering framework (MaRS \cite{Brommer2021_mars}), which uses an IMU as the primary sensor and therefore renders the state dynamics agnostic to the robotics platform.
Thus, the estimates of both vehicle localizations scenarios are similar in characteristics. However, the importance of this experiment lies in the profile and dynamic of the trajectories for the robotic vehicle itself. In this case, the trajectory is moderate in excitation and bound to the ground plane which poses a significant aspect for the validation of the proposed algorithm.

We are using sensor data from a car that drove monotonously (i.e., cruise control) with an average speed of \SI{55}{\kilo\meter\per\hour} along a trajectory of \SI{64}{\kilo\meter} with mainly straight elements and three dedicated turning points as shown by Figure~\ref{fig:car_3d_trajectory}.
The car was equipped with an IMU (@\SI{200}{\hertz}), GNSS (@\SI{5}{\hertz}) measuring position and velocity, and a magnetometer (@\SI{40}{\hertz}).
The IMU and GNSS information is used for the base of the localization algorithm, using the MaRS framework, which provides the system input to the sensor model identification approach.

\begin{figure}[!t]
    \centering
    \includegraphics[width=1.0\linewidth, trim={0 0mm 0 0mm},clip]{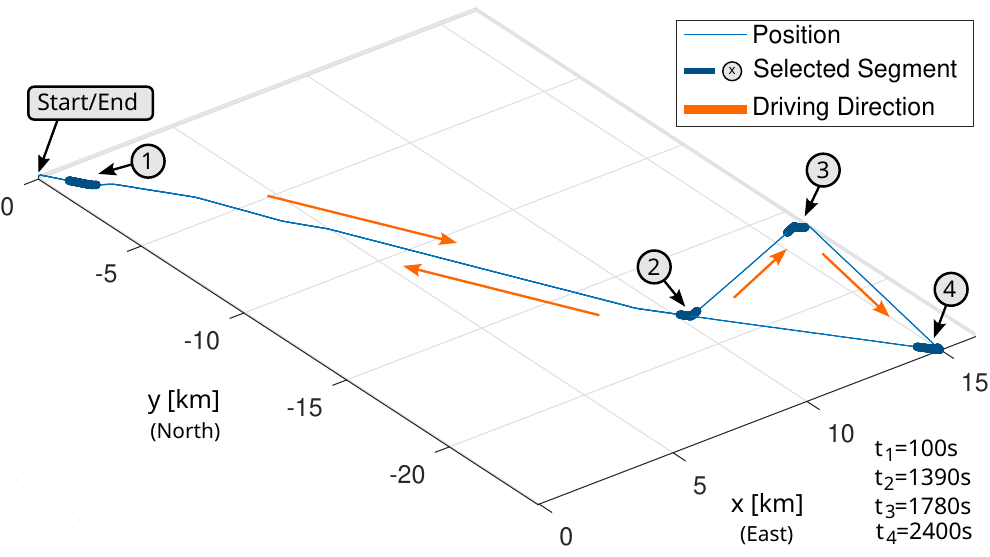}
    \caption{
    3D trajectory of the ground vehicle position estimated with the MaRS framework. Bold segments highlight the four evaluation sequences for the sensor model identification.}
    \label{fig:car_3d_trajectory}
\end{figure}

\begin{figure}[!ht]
    \centering
    \includegraphics[width=1.0\linewidth, trim={0 0mm 0 0mm},clip]{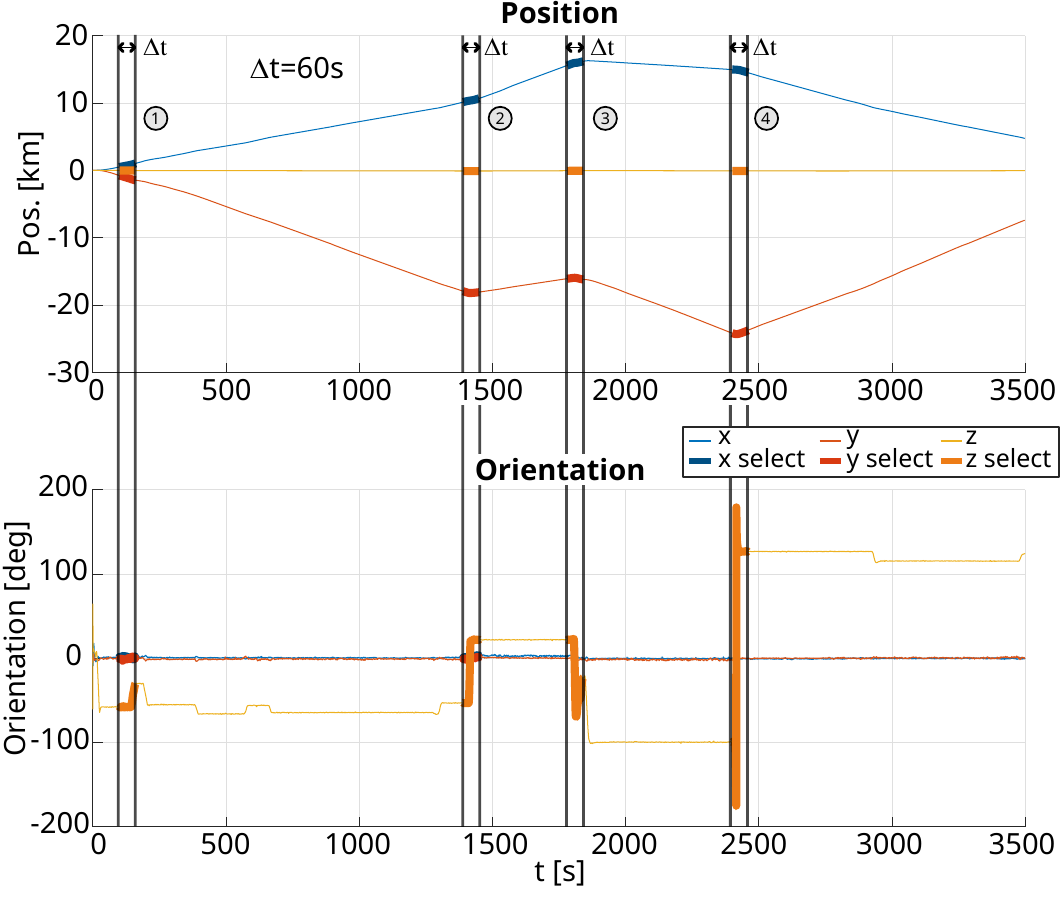}
    \vspace{-6mm}
    \caption{2D position and orientation of the estimated ground vehicle trajectory with highlighted sections corresponding to sequences in Figure~\ref{fig:car_3d_trajectory}.}
    \label{fig:car_2d_trajectory}
\end{figure}
\begin{figure}[!ht]
    \centering
    \vspace{-4mm}
    \includegraphics[width=1.0\linewidth, trim={0 0mm 0 0mm},clip]{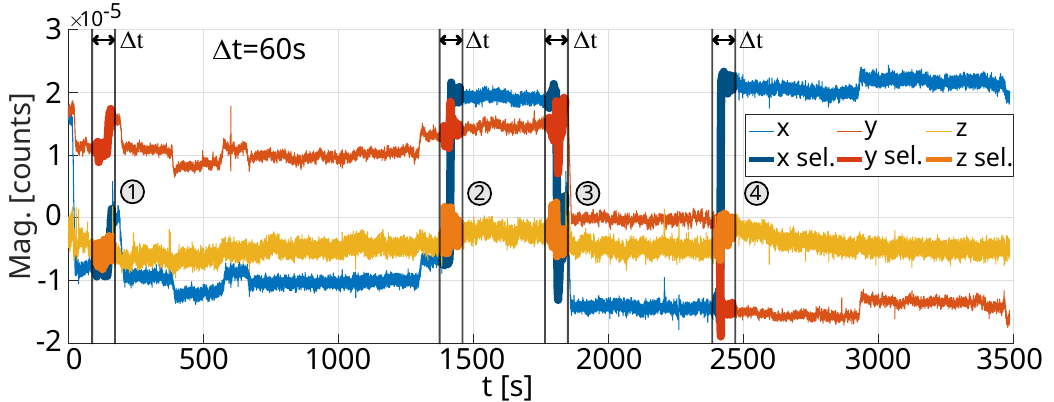}
    \caption{Raw magnetometer sensor measurement with highlighted sections used for the sensor model identification approach.}
    \label{fig:car_sensor_data}
    \vspace{-6mm}
\end{figure}

For the overall experiment, we are using the magnetometer measurements of the highlighted $\Delta t=\SI{60}{\second}$ trajectory segments for the sensor model identification.
The estimated position and rotation of the trajectory is highlighted in Figure~\ref{fig:car_2d_trajectory}, with corresponding magnetometer measurements shown by Figure~\ref{fig:car_sensor_data}.
All segments have rotational information in yaw and negligible excitation in the roll and pitch direction since the vehicle is bound to the ground and the experiment area was flat.
Segment one contains a short turn of $\SI{45}{\degree}$, segments two and three have a turn of close to $\SI{90}{\degree}$, and segment four has a turn of $\SI{120}{\degree}$.

\begin{table}[!h]
    \centering
    \begin{tabular}{ r c c c c }
        \textbf{Soft Boolean}& \textbf{Seq. 1} & \textbf{Seq. 2}  & \textbf{Seq. 3} & \textbf{Seq. }4 \\
        \midrule
        Selector        & 1.149 & 0.684 & 0.947 & 0.902\\
        Dist. to second & 1.029 & 0.527 &  0.974 &  0.681\\
        \bottomrule
    \end{tabular}
    \caption{The table shows that the magnetometer sensor model is correctly detected for all four sequences. The magnitude of the soft Boolean selectors show a clear selection, and the distance to the second highest boolean is above the threshold of $\Delta \mathsf{b} > 0.31$, which is a measure for the accuracy of the selection and confirms that the selections are not false positives (see Sec.~\ref{sec:healt_metric}).}
    \label{tab:smi_boolean_realworld_car}
    \vspace{-0.4cm}
\end{table}

\begin{table}[!h]
    \centering
    \setlength\tabcolsep{3pt}
    \begin{tabular}{cccccc}
       & & \multicolumn{4}{c}{\textbf{Error of Estimate}}\\
       \cmidrule(lr){3-6}
       $\rvar{i}{m}$ \textbf{[deg]} & \textbf{True} & {Seq. 1} & {Seq. 2} & {Seq. 3} & {Seq. 4}\\
       \midrule
       x & ~4 & 4 &  4 & 4 & ~4  \\
       y & ~17 & 17 & 17 & 17 & ~17\\
       z & -1 & -1 & -1 &  -1 & -1\\
       \midrule
       $\vvar{m}[w]$ \textbf{[deg]} & & & & & \\
       Incl. & -16.5 & -2.5 & 12.0 & 9.8 & 10.0\\
       Decl. & 75.5 & -6.2 & -14.6 & -11.6 & 21.2\\
       \bottomrule
    \end{tabular}
    \caption{Comparison of the true magnetic field vector expressed in the world frame $\vvar{m}[w]$ and the rigid-body rotation $\rvar{i}{m}$, to the identified sensor model and estimated calibration states. Estimates of $\rvar{i}{m}$ are close to zero, which is due to the minimal orientational excitation of the vehicle and low quality of observability.
    }
    \label{tab:smi_realworld_error_car}
\end{table}
\begin{table}[!h]
    \centering
    \vspace{-5mm}
    \begin{tabular}{ c c c c c }
         & \multicolumn{4}{c}{\textbf{Mag.} [$\mu$ counts]}\\
        \cmidrule(lr){2-5}
        \textbf{RMSE} & {Seq. 1} & {Seq. 2}  & {Seq. 3} & {Seq. 4} \\
        \midrule
        x & 1.196 & 3.531 & 3.176 & 4.141 \\
        y & 1.187 & 3.513 & 3.005 & 4.051 \\
        z & 0.681 & 1.213 & 1.615 & 1.154 \\
        \bottomrule
    \end{tabular}
    \caption{The table shows the RMSE for the comparison of the raw magnetometer measurements vs. the calculated measurement estimates, based on Equation~\ref{eq:magnetometer} and the parameterization determined by the identification process.}
    \label{tab:smi_realworld_rmse_car}
    \vspace{-5mm}
\end{table}

These segments are intentionally chosen because they contain rotational information. This is justified by the earlier statement that the properties of the trajectory need to render the states of the sensor observable, which is given by the orientational excitement in the turning points of the trajectory. Otherwise, the self-calibration state $\mathbf{m}_\mathsf{w}$ shown in Equation~\eqref{eq:magnetometer} is unobservable. We refer the interested reader to previous work in \cite{Brommer2020_mag} for a detailed analysis.

The results show that the sensor model identification approach correctly identified the magnetometer model based on the given measurement data for all four experiment sequences. The correct identification is validated by table~\ref{tab:smi_boolean_realworld_car} showing that the soft Boolean selector provides a clear decision with an unambiguous distance to the second soft boolean.

Please keep in mind that this particular example is meant to show that a sensor model, such as the magnetometer, can also be identified with insufficient rotational excitement in all three axes, posing a low quality in observability of the calibration states. While the identification of the model was correct for the presented experiments, the lack of roll and pitch excitement did not allow to clearly restrain the relation between $\rvar{i}{m}$ and $\vvar{mag}[w]$ causing higher errors for the estimated calibration states. This is shown by table~\ref{tab:smi_realworld_error_car} and \ref{tab:smi_realworld_rmse_car} respectively.

The estimates of $\vvar{mag}[w]$ and $\rvar{i}{m}$ show an error in the range of \SI{20}{\degree}, which may be acceptable for the initialization of a filter if the remaining trajectory would provide sufficient orientations to allow the convergence of the calibration states.
However, since the properties of the trajectory are known by design (i.e. a car on flat grounds), and the requirements on observability for the magnetometer model are known, a high-level decision can be made to only include the sensor upon rotational excitement.

\FloatBarrier

\section{Future Work}
\label{sec:future_work}
In terms of future work, we showed that it is possible to identify sensors based on an extended catalog of known sensor models. This could be extended by future work to explore a more granular approach. Lessons learned while developing the current approach are, that more granular selectors for sensor model components quickly lead to model over-determination and non-applicable sensor models due to observability constraints caused by shared observability of co-dependent states.
We evaluated a system setup where each sensor state was defined as a shared state between models, but the results were limited in terms of robustness.
In addition, if a sensor model has additive components that can be obscured by the noise floor (SNR), then detection against other model hypotheses is difficult and not robust because they are not distinguishable.
While this indicates that the system definition can not be oversimplified, future research could further isolate components of the measurement information to be more decoupled from known measurement models, and design models that include latent, non-intuitive states.
The presented approach can also be extended and adopted to address challenges in fields outside of the robotics community e.g. neuroscience, bio-engineering, or logistics.
%

\vspace{-3.5mm}
\section{Conclusion}
\label{sec:conclusion}
The introduced approach allows us to infer a sensor model by providing the current state of a vehicle and a corresponding series of sensor measurements. The selection can be validated by the introduced health metric, defined and validated by an extensive cross-examination of final optimization losses and selection parameters, utilizing a scatter-plot matrix to maximize the recall of the metric at a \SI{100}{\percent} precision.
The approach also allows us to reliably decide if a reference frame for sensor measurements is required or not. As a result, the removal or introduction of a reference frame as a possible calibration state showed improved accuracy for the remaining calibration parameters in both cases.
The inference of an initial guess for state calibrations of the formally unknown measurement signal is sufficiently accurate for the initialization of a localization algorithm, which was proven through the application in a real-world scenario.
We did not identify any severe shortcomings. The exposed failure modes only occur in scenarios in which the measurement noise of individual components is unrealistically high.

This method is helpful for inexperienced users who need to identify the source and type of measurement, sensor calibrations, or sensor reference frames. 
It will also be important for the automated integration of sensor modalities in the field of modular multi-agent scenarios and modularized robotic platforms that are augmented by sensor modalities during runtime, ensuring a more accurate and robust integration of new sensor elements.
Overall, this approach aims to provide a simplified integration of sensor modalities to downstream applications and circumvent common pitfalls in the usage and development of localization approaches.
%

\bibliographystyle{IEEEtran}
\bibliography{bib/christian}

\vspace{8.5mm}
\section*{Biographies}
\vspace{-8mm}
\begin{IEEEbiography}
[{\includegraphics[width=1in,height=1.25in,clip,keepaspectratio]{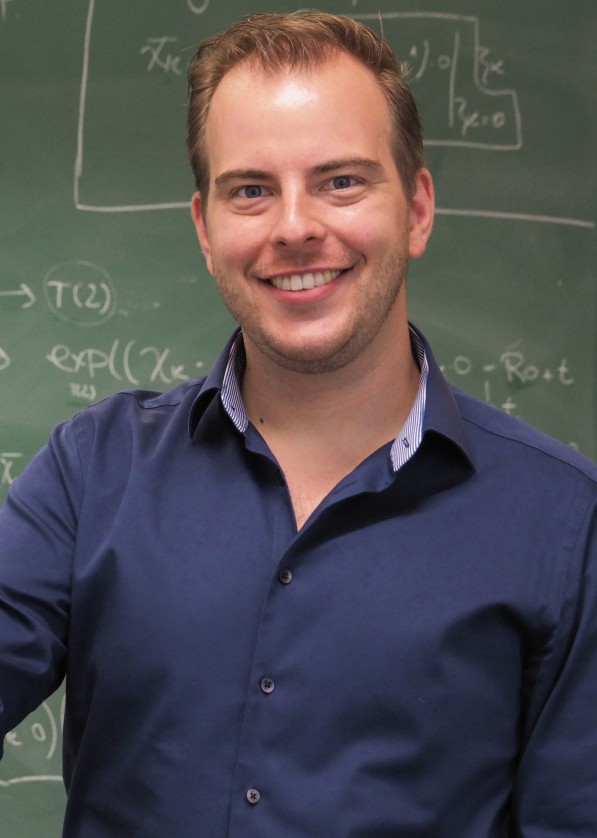}}]
{Christian Brommer} received the Ph.D. degree in robotics from the University of Klagenfurt, at the Control of Networked Systems group, Austria, in 2024. He is currently a Postdoctoral Fellow with Oak Ridge Associated Universities at the DEVCOM Army Research Laboratory. From 2014 to 2018, he conducted research on UAV localization and autonomy at NASA’s Jet Propulsion Laboratory. His research interests include modular multi-sensor fusion and autonomy for robotic vehicles and swarms.
\end{IEEEbiography}

\vspace{-6mm}

\begin{IEEEbiography}[{\includegraphics[width=1in,height=1.25in,clip,keepaspectratio]{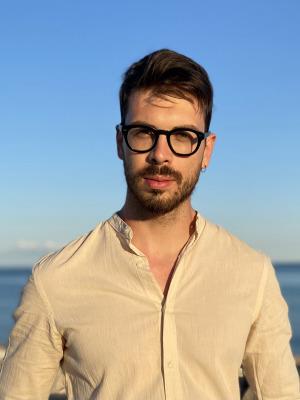}}]{Alessandro Fornasier} received the Ph.D. degree in robotics from the University of Klagenfurt, at the Control of Networked Systems group, Austria, in 2024, where he focused on equivariant aided inertial state estimation and multi-sensor fusion. He is currently a senior Robotics Engineer with Hexagon Robotics. His research interests include aided inertial localization and mapping, sensor fusion, and spatial AI for autonomous systems.
\end{IEEEbiography}

\vspace{-6mm}

\begin{IEEEbiography}[{\includegraphics[width=1in,height=1.25in,clip,keepaspectratio]{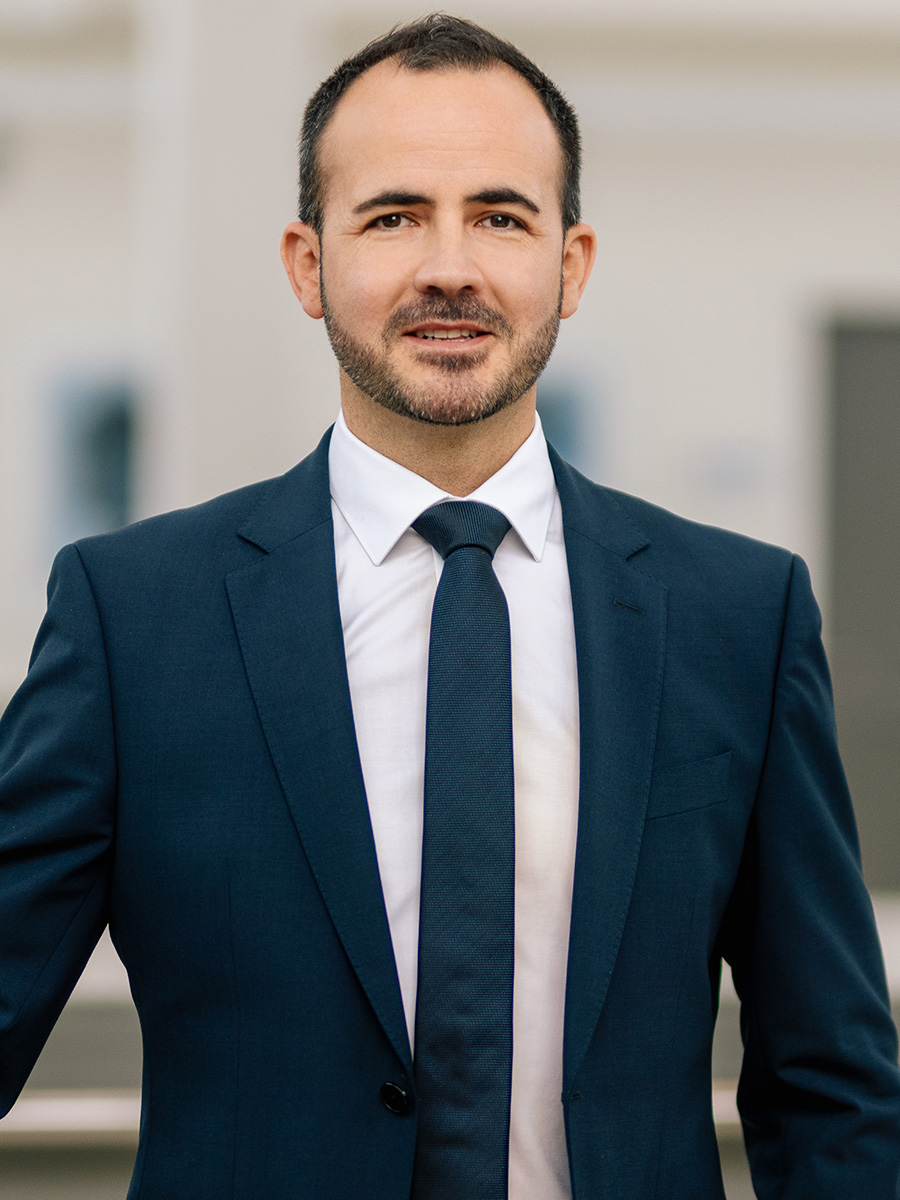}}]{Jan Steinbrener} received the Ph.D. degree in physics from Stony Brook University, NY, USA, in 2010, where he worked on inverse methods in X-ray diffraction microscopy. He since had various positions in industry and academia and is currently Vice Rector of Research and International Affairs and an Associate Professor with the department of Smart Systems Technologies at the University of Klagenfurt, Austria. His research focuses on AI-based methods for control and navigation of robotic systems.
\end{IEEEbiography}

\vspace{-6mm}

\begin{IEEEbiography}[{\includegraphics[width=1in,height=1.25in,clip,keepaspectratio]{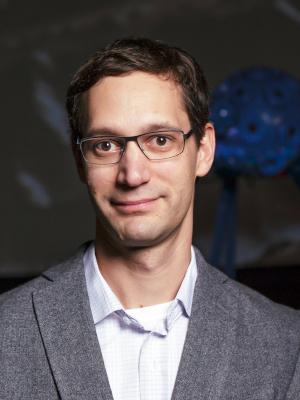}}]{Stephan Weiss} received the Ph.D. degree in robotics from ETH Zurich, Switzerland. He is currently a Full Professor at the University of Klagenfurt, Austria, where he leads the Control of Networked Systems Group. He was a Research Technologist with the mobility and robotic systems section at the NASA Jet Propulsion Laboratory where he took part in initiating the Ingenuity Mars helicopter project. His research interests include vision-based navigation, visual-inertial state estimation, and autonomous aerial vehicles.
\end{IEEEbiography}

\vfill

\end{document}